\theoremstyle{plain}
\newtheorem{theorem}{Theorem}[section]
\newtheorem{proposition}[theorem]{Proposition}
\theoremstyle{definition}
\newtheorem{definition}[theorem]{Definition}
\theoremstyle{remark}
\title{HMARL-CBF -- Hierarchical Multi-Agent Reinforcement Learning with Control Barrier Functions for Safety-Critical Autonomous Systems}
\author{%
  \begin{tabular}{@{}c@{}}%
    H.\,M.\ Sabbir Ahmad$^{1}$, Ehsan Sabouni$^{1}$, Alexander Wasilkoff$^{1}$, Param Budhraja$^{1}$, \\  Zijian Guo$^{1}$ Songyuan Zhang$^{2}$,  Chuchu Fan$^{2}$, Christos Cassandras$^{1}$, Wenchao Li$^{1}$
  \end{tabular}\\[1ex]
  $^{1}$Boston University, $^{2}$Massachusetts Institute of Technology\\[1ex]
  \texttt{\{sabbir92,esabouni,awasilkoff,paramb,zjguo,cgc,wenchao\}@bu.edu}\\
  \texttt{\{szhang21,chuchu\}@mit.edu}
}
\begin{document}

\maketitle

\begin{abstract}
We address the problem of safe policy learning in multi-agent safety-critical autonomous systems.
In such systems, it is necessary for each agent to meet the safety requirements at all times while also cooperating with other agents to accomplish the task. Toward this end, we propose a safe Hierarchical Multi-Agent Reinforcement Learning (HMARL) approach based on Control Barrier Functions (CBFs). Our proposed hierarchical approach decomposes the overall reinforcement learning problem into two levels –- learning joint cooperative behavior at the higher level and learning safe individual behavior at the lower or agent level conditioned on the high-level policy. Specifically, we propose a skill-based HMARL-CBF algorithm in which the higher-level problem involves learning a joint policy over the skills for all the agents and the lower-level problem involves
learning policies to execute the skills safely with CBFs. We validate our approach on challenging environment scenarios whereby a large number of agents have to safely navigate through conflicting road networks. Compared with existing state-of-the-art methods, our approach significantly improves the safety achieving near perfect (within $5\%$) success/safety rate while also improving performance across all the environments.   
\end{abstract}

\section{Introduction}
Safety-critical multi-agent systems are ever-growing with applications spanning self-driving cars, unmanned aerial vehicles (UAVs), swarm robotics, soft robotics, autonomous underwater vehicles (AUVs), to name only a few. As these systems operate in complex and often unpredictable environments, 
failure can result in catastrophic outcomes, including risks to human lives, environmental damage, or severe economic consequences.
It is, therefore,  necessary to learn cooperative policies which can achieve the task while ensuring safety between agents and with respect to the environment. 

The problem is further exacerbated in partially observable settings. Learning a flat policy using existing Multi-Agent Reinforcement Learning (MARL) \cite{lowe2017multi, rashid2018qmix, sunehag2017value} algorithms by adjoining the safety constraints to the reward function provides one possible approach; this, however, suffers from scalability and high sample complexity with the number of agents, and, most importantly, lacks safety guarantees. The problem of safety has been addressed in single and multi-agent RL using constrained Markov Decision Processes (CMDP) \cite{zhao2024multi, altman2021constrained, GU2023103905}. However, this formulation considers the discounted sum of the constraints over a trajectory, thus enforcing safety in a statistical sense over trajectories as opposed to pointwise time constraints enforced over each entire trajectory, pertinent to safety-critical systems. A Hierarchical approach as proposed in \cite{tang2019hierarchicaldeepmultiagentreinforcement,wang2023hierarchical,wang2023hierarchical, Ghavamzadeh} can alleviate sample complexity but existing methods do not tackle safety. Toward this end, we propose the following:
\begin{enumerate} \item We propose HMARL - CBF - a novel CTDE Safety Guaranteed Hierarchical Multi-Agent Reinforcement Learning approach based on CBFs for safety-critical cooperative multi-agent partially observable systems. These systems have safety constraints that have to be satisfied at all times. 
\item Our approach adopts
a hierarchical structure based on skills/options, where the high-level policy leverages skills to optimize the cooperative behavior among agents, while the low-level policy is dedicated to learning and executing safe skills. 
It is worth emphasizing that this hierarchical design guarantees safety during both the training phase and real-world deployment. 
\item We validate our proposed approach on challenging conflicting environment scenarios where a large number of agents must each travel safely from its origin to its  destination without colliding with other agents. Simulation results demonstrate superior performance and safety compliance achieving near perfect ($\ge 95\%$) success/safety rate compared to existing benchmark methods. 
\end{enumerate}
\section{Related Works}
\textbf{Multi-Agent RL.} MARL provides a solution for cooperative multi-agent games, commonly following the Centralized Training with Decentralized Execution (CTDE) scheme to improve coordination \cite{lowe2017multi, rashid2018qmix, sunehag2017value, 9489303}. Value decomposition methods (e.g., VDN \cite{sunehag2017value}, QMIX \cite{rashid2018qmix}, QTRAN \cite{son2019qtran}) improve credit assignment by factorizing the joint value function, while policy gradient methods (e.g., MADDPG \cite{lowe2017multi}, COMA \cite{foerster2018counterfactual}) use centralized critics to stabilize learning. However, safety remains unaddressed, as existing methods do not consider real-time constraints.

\paragraph{Hierarchical Multi-Agent RL (HMARL).}
Hierarchical RL structures decision-making hierarchically, accelerating learning \cite{pertsch2020spirl}. Classical methods include HAM \cite{parr1997reinforcement}, MAXQ \cite{dietterich2000hierarchical}, options \cite{sutton1999between,precup2000temporal}, and feudal architectures \cite{dayan1992feudal}. Deep HRL extends this with subgoal-based \cite{vezhnevets2017feudal,nachum2018hiro}, option-based \cite{bacon2017option,harb2018waiting}, and skill-based frameworks \cite{li2019hierarchical, gehring2021hierarchical}. HMARL introduces hierarchy in MARL for enhanced coordination and exploration. Notable methods include value decomposition (QTRAN \cite{son2019qtran}, MAXQ \cite{dietterich2000hierarchical}, HAVEN \cite{wang2023hierarchical}, \cite{Ghavamzadeh}), feudal structures \cite{ahilan2019feudalmultiagenthierarchiescooperative}, and option-based hierarchies \cite{tessler2017deep}. Temporal abstraction is explored in \cite{tang2019hierarchicaldeepmultiagentreinforcement}, while meta-policy learning appears in \cite{vezhnevets2017feudal}. Skill discovery in HRL has explored mutual information maximization \cite{gregor2016variational}, trajectory clustering \cite{mankowitz2016adaptive}, and hierarchical skill identification \cite{yang2023hierarchical},option learning \cite{harb2018learning}, imitation \cite{sharma2020skill}, and adaptive refinement via Skill-Critic \cite{hao2023skill}. However, these works define skills as fixed length or variable length sequences lack interpretable semantics and safety guarantees, limiting applicability to safety-critical systems. 

\textbf{Safe RL} Safety in multi-agent systems is commonly addressed using the Constrained Markov Decision Process (CMDP) framework \cite{zhao2024multi, altman2021constrained, GU2023103905, zhao2023multi}. Approaches include primal methods \cite{chow2018lyapunov, chow2019lyapunov, liu_1} and primal-dual methods \cite{liu2020natural, ding2020upper, safedreamer}. While CMDPs ensure safety over trajectories, they do not consider pointwise time constraints critical for safety-critical systems. Multi-agent scenarios further complicate this, as the number of constraints scales with agents. Primal-dual methods \cite{chow2019lyapunov, pmlr-v202-wang23as,gu2023safe} also face issues with training instability, slow convergence, and hyperparameter sensitivity. Additionally, works addressing reach avoid problems \cite{NEURIPS2024_3750e99b, yuandma2022rcrl} are primarily limited to single agent fully observable settings. Previous works on RL-CBFs have harnessed exact or approximate models \cite{sabouni2024reinforcementlearningbasedrecedinghorizon, Berducci2023LearningAS, cheng2023safe} to learn safe policies. The existing methods tackle safety for single-agent systems \cite{cheng2023safe, Emam2021SafeRL}, and multi-agent systems \cite{pereira2020safeoptimalcontrolusing, gao2023online}. \cite{sabouni2024reinforcementlearningbasedrecedinghorizon} considers a multi-agent cooperative setting conditioned on a fixed coordination policy which can be suboptimal. In \cite{cheng2023safe}, solved constrained RL problem with CBF-based chance constraints using the augmented Lagrangian, which nullifies the merits of CBF filters. Additionally, existing works require a nominal policy which is learned \cite{Emam2021SafeRL, 10.1609/aaai.v33i01.33013387}, however this, can be challenging for MAS. Finally, \cite{zhang2025discrete} proposed a method for mulit-agent systems, but such methods suffer from high sample complexity and scalability due to the flat policy structure.
\section{Background}
\label{background}
\subsection{Semi-Markov Decision Process}
A Semi-Markov Decision Process (SMDP) is a generalization of a MDP that allows actions to take variable amounts of time steps before transitioning to the next state. Building on that, similar to \cite{makar2001hierarchical}, we model a Multi-Agent Semi-Markov Decision Process (MSMDP) as a tuple \( (\mathcal{N}, \mathcal{S}, \mathcal{Z}, P, R, \gamma, \mathcal{T}) \) described as follows: \( \mathcal{N} \) is a finite set of $N$ agents, with each agent \( i \in \mathcal{N} \) having an individual action set \( \mathcal{Z}^i \). The joint action space \( \mathcal{Z} = \prod_{i=1}^n \mathcal{Z}^i \) consists of joint actions \( \boldsymbol{z} = \langle z^1, \ldots, z^n \rangle \), representing the joint actions of the agents. The joint state space is denoted by $ \mathcal{S} = \prod_{i \in \mathcal{N}}S^i \times \mathcal{S}^e$ where $\mathcal{S}^e$ is the environment state. 
The reward function is denoted by \( R : \mathcal{S} \times \mathcal{Z} \times \mathbb{N}  \rightarrow \mathbb{R} \) where $R(\boldsymbol{s}, \boldsymbol{z}, k)$ corresponds to the reward accrued executing action $\boldsymbol{z}$ in state $s$ in $k$ steps, $\gamma$ is the discount factor, and the multistep transition probability function \( P : \mathcal{S} \times \mathbb{N} \times \mathcal{S} \times \mathcal{Z} \rightarrow [0,1] \) gives the probability of transitioning from state \( \boldsymbol{s} \) to state \( \boldsymbol{s}' \) in \( k \) time steps as a result of the joint action \( \boldsymbol{z} \). Since the joint actions may have varying termination times, \( P \) is influenced by the duration of executing an action from the state it is initiated until transitioning to the next state (also referred to as the decision epoch) and a termination scheme \( \mathcal{T} \).

Three termination strategies for the temporally extended joint actions, $\tau_{\text{any}}$, $\tau_{\text{all}}$, and $\tau_{\text{continue}}$, are analyzed in \cite[Figure~1]{NIPS2002_4b4edc26}. First, we define decision epoch as the time when either a subset or all the agents select new actions. Synchronous schemes, $\tau_{\text{any}}$ and $\tau_{\text{all}}$, set decision epochs when either any or all actions within a joint action terminate, with all agents selecting new actions. In contrast, the asynchronous scheme $\tau_{\text{continue}}$ sets decision epoch when any of the actions within the joint action terminate. Then let those actions that did not terminate
naturally continue running while initiating new actions for the agents that completed their action. 
Our decentralized approach adopts this asynchronous action update strategy. 

The equation corresponding to the state value function $V^{\pi}(\boldsymbol{s})$  and state-action value function $Q^{\pi}(\boldsymbol{s},\boldsymbol{z})$ are defined below for a MSMDP:
\begin{align}
\label{value_fcn}
V^{\pi}(\boldsymbol{s}) &= \mathbb{E}_{\boldsymbol{\pi}} \left[ Q^{\pi}(\boldsymbol{s}, \boldsymbol{z}) \right] 
= \mathbb{E}_{\boldsymbol{z},  k, \, \boldsymbol{s}'} \left[ R(\boldsymbol{s}, \boldsymbol{z}, k) + \gamma^{k} V^{\pi}(\boldsymbol{s}') \right] \\
\label{action_value_fcn}
Q^{\pi}(\boldsymbol{s}, \boldsymbol{z}) &= \mathbb{E}_{k, \, \boldsymbol{s}'} \left[ R(\boldsymbol{s}, \boldsymbol{z}, k) + \gamma^{k} V^{\pi}(\boldsymbol{s}') \right] 
= \mathbb{E}_{k, \, \boldsymbol{s}'} \left[ R(\boldsymbol{s}, \boldsymbol{z}, k) + \gamma^{k} \mathbb{E}_{\boldsymbol{z}'} \left[ Q^{\pi}(\boldsymbol{s}', \boldsymbol{z}') \right] \right]
\end{align}
where $k$ represents the time interval until agent(s) update their action, and $R(\boldsymbol{s},\boldsymbol{z},k)$ represents the reward obtained by taking the joint action $\boldsymbol{z}$ in state $\boldsymbol{s}$ over the random time interval $k$. The learning objective is to find the joint policy $\pi$ for all the agents that maximizes: $\mathbb{E}_{\pi}[\sum_{t=0}^{\infty}\gamma^{\sum_{\substack{t' = 0}}^{t-1}k_{t'}}R(\boldsymbol{s}_t,\boldsymbol{z}_t,k_t)]$. 

\subsection{Control Barrier Functions}
Consider the following agent dynamics (with the agent index dropped for simplicity):
\begin{align}\label{eq_nonlinear}
\boldsymbol{\dot{s}}=f(\boldsymbol{s}) + g(\boldsymbol{s})\boldsymbol{a} ,
\end{align}
where $\boldsymbol{s} \in \mathbb{R}^n$ is the state of the agent, $\boldsymbol{a} \in \mathcal{U} \subseteq \mathbb{R}^q$ 
are the primitive actions, and $f:\mathbb{R}^{n} \rightarrow \mathbb{R}^n$ and $g:\mathbb{R}^n \rightarrow \mathbb{R}^{n\times q}$ are locally Lipschitz.

\begin{definition}[Class $\mathcal{K}$ function]
    A continuous function $\alpha : [0, \beta)\rightarrow [0,\infty], \beta > 0$ is said to belong to class $\mathcal{K}$ if it is strictly increasing and $\alpha(0)=0$.
\end{definition}

\begin{definition}
A set $C$ is forward invariant for system \eqref{eq_nonlinear} if for every $\boldsymbol{s}(0) \in C$, we have $\boldsymbol{s}(t) \in C$, for all $t \geq 0$.
\end{definition}

\begin{definition}[Control barrier function \cite{Ames_01}]
\label{Def_3}
    Given a continuously differentiable function $b:\mathbb{R}^n\rightarrow \mathbb{R}$ and the set $C:=\{\boldsymbol{s} \in \mathbb{R}^n:b(\boldsymbol{s})\geq 0\}$, $b(\boldsymbol{s})$ is a candidate control barrier function (CBF) for the system (\ref{eq_nonlinear}) if there exists a class $\mathcal{K}$ function $\alpha$ such that
    \begin{equation}
        \sup _{\boldsymbol{a} \in \mathcal{U}}\left[L_{f} b(\boldsymbol{s})+L_{g} b(\boldsymbol{s}) \boldsymbol{a}+\alpha(b(\boldsymbol{s}))\right] \geq 0,
        \label{cbf_condition}
    \end{equation}
    for all $\boldsymbol{s} \in C$, where $L_{f}, L_{g}$ denote the Lie derivatives along $f$ and $g$, respectively. If we can find a control barrier function $b$ then the set $C$ is forward invariant. Thus when $C$ is the set of safe states a control barrier function provides us with a safety certificate. CBFs can be extended to higher order resulting in HOCBFs.
\end{definition}

\begin{theorem}[\cite{Ames_01}]
\label{cbf_theorem}
    Given a constraint $b(\boldsymbol{s}(t))$ with the associated sets $C_i$'s as defined in (\ref{C set}), any Lipschitz continuous controller $\boldsymbol{a}(t)$, that satisfies (\ref{HOCBF}) $\forall t \geq t_{0}$ renders the sets $C_i$ (including the set corresponding to the actual safety constraint $C_1$) forward invariant for control system \eqref{eq_nonlinear}.
\end{theorem}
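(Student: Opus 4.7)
The plan is to establish forward invariance of each $C_i$ by a backward induction on $i$, starting from the topmost set singled out by the HOCBF condition and propagating invariance downward to $C_1=\{\boldsymbol{s}:b(\boldsymbol{s})\geq 0\}$, which encodes the actual safety constraint. Recall that for a constraint of relative degree $m$, the HOCBF construction introduces auxiliary functions $\psi_0(\boldsymbol{s}):=b(\boldsymbol{s})$ and $\psi_i(\boldsymbol{s}):=\dot{\psi}_{i-1}(\boldsymbol{s})+\alpha_i(\psi_{i-1}(\boldsymbol{s}))$ for $i=1,\ldots,m$, together with nested sets $C_i:=\{\boldsymbol{s}:\psi_{i-1}(\boldsymbol{s})\geq 0\}$. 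Condition (\ref{HOCBF}) then amounts to requiring $\psi_m(\boldsymbol{s}(t),\boldsymbol{a}(t))\geq 0$ along closed-loop trajectories of (\ref{eq_nonlinear}).

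The key idea is to reduce forward invariance of each $C_i$ to a scalar differential inequality of the form $\dot{y}(t)\geq -\alpha(y(t))$ with $\alpha$ of class $\mathcal{K}$ and $y(t_0)\geq 0$. First I would apply the hypothesis on $\boldsymbol{a}(t)$ to obtain $\psi_m\geq 0$ along the trajectory, which by the very definition of $\psi_m$ reads $\dot{\psi}_{m-1}(\boldsymbol{s}(t))+\alpha_m(\psi_{m-1}(\boldsymbol{s}(t)))\geq 0$. Setting $y(t):=\psi_{m-1}(\boldsymbol{s}(t))$, this is a scalar comparison inequality against $\dot{z}=-\alpha_m(z)$; since $\alpha_m(0)=0$, the constant $z\equiv 0$ is a solution, and the standard comparison lemma yields $y(t)\geq 0$ for all $t\geq t_0$ whenever $y(t_0)\geq 0$, establishing forward invariance of $C_m$. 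Unrolling the definition of $\psi_{m-1}$ then gives the same type of inequality for $\psi_{m-2}$, and iterating the argument $m-1$ more times produces forward invariance of $C_{m-1},\ldots,C_1$ in turn, with the last step delivering the forward invariance of the safe set $C_1$.

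The main technical obstacle is the clean invocation of the comparison lemma when $\alpha_i$ is merely continuous and of class $\mathcal{K}$ rather than locally Lipschitz. I would resolve this either by restricting to extended class $\mathcal{K}$ functions that are locally Lipschitz (the standard convention in the CBF literature, e.g., the linear choice $\alpha_i(r)=k_i r$), which makes the argument an immediate application of a Gronwall-style comparison theorem, or by appealing to a one-sided comparison principle that only requires continuity and uniqueness of solutions emanating from $0$. A secondary but necessary check is that the Lipschitz continuity of $\boldsymbol{a}(t)$, together with the local Lipschitz continuity of $f$ and $g$, yields existence and uniqueness of a classical solution $\boldsymbol{s}(\cdot)$ to (\ref{eq_nonlinear}) that is regular enough for the higher-order derivatives implicit in the $\psi_i$ to be well defined along the trajectory; only then is the chain of comparison inequalities meaningful.
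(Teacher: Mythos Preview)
The paper does not actually prove this theorem: it is stated as a citation of prior work, and the text explicitly says ``The proof is omitted (see \cite{xiao2019HOCBF} for the proof).'' Your proposal is correct and is precisely the standard argument given in that reference---backward induction on $i$ using the recursive definition $\psi_i=\dot{\psi}_{i-1}+\alpha_i(\psi_{i-1})$ together with the comparison lemma to propagate nonnegativity from $\psi_{m-1}$ down to $\psi_0=b$---so there is nothing to contrast.
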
 
\label{background}
\begin{definition}[Relative degree]
    The relative degree of a (sufficiently many times) differentiable function $b: \mathbb{R}^n \rightarrow \mathbb{R}$ with respect  to system \eqref{eq_nonlinear} is the number of times it needs to be differentiated along its dynamics until the control $\boldsymbol{a}$ explicitly appears in the corresponding derivative.
\end{definition}

For a constraint $b(\boldsymbol{s}) \geq 0$ with relative degree $m, \ b$ : $\mathbb{R}^{n} \rightarrow \mathbb{R}$, and $\zeta_{0}(\boldsymbol{s}):=b(\boldsymbol{s})$, we define a sequence of functions $\zeta_{i}: \mathbb{R}^{n} \rightarrow \mathbb{R}, i \in\{1, \ldots, m\}$ :
\begin{equation} \label{psi functions}
    \zeta_{i}(\boldsymbol{s}):=\dot{\zeta}_{i-1}(\boldsymbol{s})+\alpha_{i}\left(\zeta_{i-1}(\boldsymbol{s})\right),\; i \in\{1, \ldots, m\},
\end{equation}
where $\alpha_{i}( ),\ i \in\{1, \ldots, m\}$ denotes a $(m-i)^{\text {th }}$ order differentiable class $\mathcal{K}$ function. We further define a sequence of sets $C_i,\ i\in \{1,...,m\}$ associated with \eqref{psi functions} which take the following form,
\begin{equation} \label{C set}
    C_i:=\{\boldsymbol{s} \in \mathbb{R}^n : \zeta_{i-1}(\boldsymbol{s})\geq 0\},\; i \in \{1,...,m\}.
    \end{equation}
    
\begin{definition}[High Order CBF (HOCBF) \cite{xiao2019HOCBF,Xiao2023}]
    Let $C_1,...,C_m$ be defined by \eqref{C set} and $\zeta_1(\boldsymbol{s}),...,\zeta_m(\boldsymbol{s})$ be defined by $\eqref{psi functions}$. A function $b: \mathbb{R}^n \rightarrow \mathbb{R}$ is a High Order Control Barrier Function (HOCBF) of relative degree $m$ for system \eqref{eq_nonlinear} if there exists $(m-i)^{th}$ order differentiable class $\mathcal{K}$ functions $\alpha_i,\ i \in \{1,...,m-1\}$ and a class $\mathcal{K}$ function $\alpha_m$ such that 
    \begin{equation} \label{HOCBF}
        \sup_{\boldsymbol{a}\in \mathcal{U}} [L_f^mb(\boldsymbol{s})+L_gL_f^{m-1}b(\boldsymbol{s})\boldsymbol{a}+\Omega(b(\boldsymbol{s}))+\alpha_m(\zeta_{m-1}(\boldsymbol{s}))] \geq 0
    \end{equation}
    for all $\boldsymbol{s} \in \bigcap_{i=1}^m C_i$. In \eqref{HOCBF}, $L_f^m$ and $L_g$ denotes derivative along $f$ and $g$ $m$ times and one time respectively, and $S(.)$ denotes the remaining Lie derivative along $f$ with degree less than or equal to $m-1$ (omitted for simplicity, see \cite{xiao2019HOCBF}).
\end{definition}

Note that the HOCBF in \eqref{HOCBF} is a general form of the degree one CBF \cite{Ames_01} ($m=1$) and exponential CBF in \cite{nguyen2016exponential}. The following theorem on HOCBFs implies the forward invariance property of the CBFs and the original safety set. The proof is omitted (see \cite{xiao2019HOCBF} for the proof).

\begin{definition}[Control Lyapunov function (CLF)\cite{Ames_01}] A continuously differentiable function $V:\mathbb{R}^n \rightarrow \mathbb{R}$ is a globally and exponentially stabilizing CLF for \eqref{eq_nonlinear} if there exists constants  $c_i \in \mathbb{R}_{>0}$, $i=1,2$, such that $c_1 ||\boldsymbol{s}||^2 \leq V(\boldsymbol{s}) \leq c_2 ||\boldsymbol{s}||^2$, and the following inequality holds
\begin{equation} \label{CLF}
\inf_{\boldsymbol{a}\in \mathcal{U}} [L_fV(\boldsymbol{s})+L_gV(\boldsymbol{s})a+ \eta(\boldsymbol{s}) ]\leq e,
\end{equation}
where $e$ makes this a soft constraint.
If the dynamics are of the form: $ \boldsymbol{\dot{s}}=f(\boldsymbol{s},\boldsymbol{a})$, we can express it in the form:
\begin{equation}
\boldsymbol{\dot{s}}'=f'(\boldsymbol{s}',\boldsymbol{a})+g'(\boldsymbol{s}')\boldsymbol{a}'
\end{equation}
where $ \boldsymbol{s}' = \begin{bmatrix}
\boldsymbol{s} \\ \boldsymbol{a}
\end{bmatrix}$ and $\boldsymbol{a}'$ are the augmented system states and new primitive actions of the system respectively, $f' = \begin{bmatrix}
    f(\boldsymbol{s},\boldsymbol{a}) \\ 
    \boldsymbol{0}_{\text{dim}(\boldsymbol{a})}
\end{bmatrix}, \text{and } g' = \begin{bmatrix}
    \boldsymbol{0}_{\text{dim}(\boldsymbol{s}) \times \text{dim}(\boldsymbol{a'})} \\
    \boldsymbol{B}
\end{bmatrix}$, and $B \in \mathbb{R}^{\text{dim}(\boldsymbol{a}) \times \text{dim}(\boldsymbol{a}')}$. Generally, $B$ contains ones in its diagonal entries. The redefined dynamics can be used subsequently for defining CBFs.
\end{definition}







\section{Problem Formulation}
\label{prob_formulation}
\begin{figure*}[h!t]
    \centering
    \includegraphics[width=0.8\linewidth]{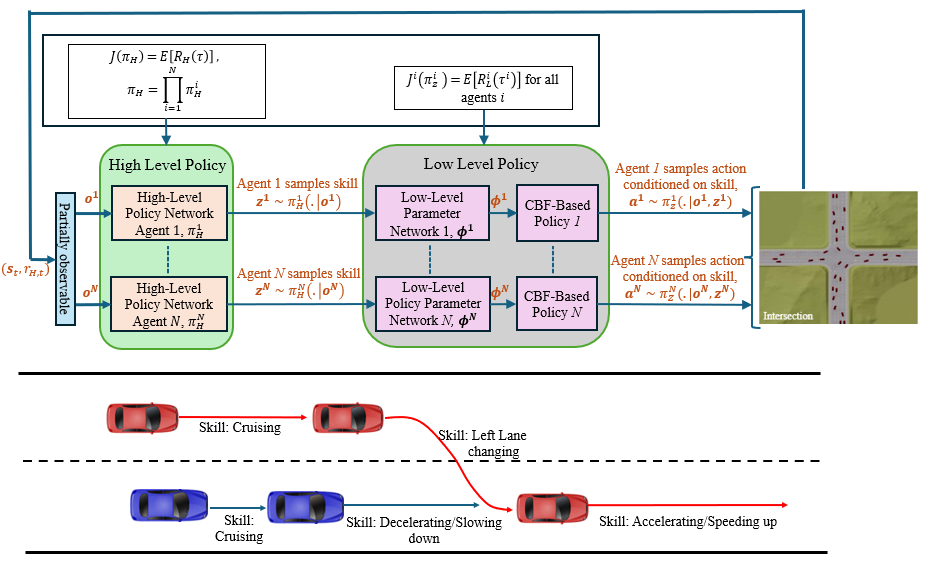}
    \caption{ 
    At the higher level, agents choose skills in a decentralized manner based on $\pi_H$ conditioned on their own observation, whereas at the lower level every agent $i$   executes these skills by learning a  parametric CBF-based policy $pi_{z^i}$ conditioned on their observation and the skill. The extrinsic trajectory return $R_H(\tau)$ is used to jointly learn the cooperative policy for all the agents centrally, and the intrinsic trajectory return $R_L^i(\tau^i)$ for agent $i$'s trajectory $\tau^i$ is used to learn the policies corresponding to the skills. The high level and low level policy networks are shared among all agents.}
    \label{fig:env_metadrive}
\end{figure*}
We consider a cooperative multi-agent setting among a set of agents. Let the set of agents be denoted by $\mathcal{N} = \{1,\dots,N\}$. We consider that every agent $i \in \mathcal{N}$ has dynamics as in \eqref{eq_nonlinear}. We partition the state space into two disjoint spaces corresponding to the agent states and the environment states. The environment state is denoted by $s^e \in \mathcal{S}^{e}$. The state of an agent $i$ is denoted by $\boldsymbol{s}^i\in \mathcal{S}^i \subset \mathbb{R}^n$ and the control input vector is denoted by $\boldsymbol{a}^i \in \mathcal{A}^i = [\boldsymbol{a}_{min},\boldsymbol{a}_{max}]\subset \mathbb{R}^q$ with $\mathcal{A}^i$ the control input constraint set for the agent $i$, and $\boldsymbol{a}_{min},\boldsymbol{a}_{\max} \in \mathbb{R}^q$. Then, the joint state space is denoted by $\mathcal{S} = \prod_{i \in \mathcal{N}}\mathcal{S}^i \times \mathcal{S}^e$, and the joint action space of the agents is denoted by $\mathcal{A} = \prod_{i\in \mathcal{N}}\mathcal{A}^i$. In our setting, there is a performance related cost function $l(\boldsymbol{s},\boldsymbol{a})$ associated with the system specification, where $\boldsymbol{s} \in \mathcal{S}$ (more specifically $\prod_{i \in \mathcal{N}}\mathcal{S}^i$) and $\boldsymbol{a} \in \mathcal{A}$. 
Without loss of generality, in our cooperative setting, the stage cost is defined in the form $l(\boldsymbol{s},\boldsymbol{a}) = \sum_{i\in\mathcal{N}} l^i(\boldsymbol{s}^i,\boldsymbol{a}^i)$, where $l^i(\boldsymbol{s}^i,\boldsymbol{a}^i)$ denotes the stage cost associated with agent $i \in \mathcal{N}$. 

We consider a partially observable setting whereby the observation space of agent $i$ is denoted as 
$\mathcal{O}^i \subseteq \mathcal{S}$, and the joint observation space is 
$\mathcal{O} = \prod_{i \in \mathcal{N}} \mathcal{O}^i$. Each agent $i$ receives an observation 
$o^i \in \mathcal{O}^i$, which includes its own state $s^i$ together with those portions of other agents' or environmental states that are observable to it. Associated with every agent $i$, we define a set of safety-related functions
\[
\{\, b^{i}_{j}(o^i) : j \in \mathcal{N}^i(\boldsymbol{s}), \, i \in \mathcal{N} \,\}, 
\qquad b^{i}_{j}:\mathcal{O}^i \to \mathbb{R},
\]
where $b^{i,j}$ encodes the safety condition between agent $i$ and another agent or environmental entity $j$ that is included in $o^i$. Equivalently, one may write 
$b^{i}_{j}(\boldsymbol{s}^i,\boldsymbol{s}^j)$ to emphasize the dependence on the agent’s own state and the observed state of entity $j$. We assume that the constraints are initially satisfied upon their introduction. Thus, our safety-constrained problem is expressed as an Stochastic Optimal Control Problem (SOCP):
\begin{align} \label{eq:game_obj}
    \min_{\pi \in \Pi} \ &\mathbb{E}_{\tau \sim \pi} \left[ \sum_{t=0}^{\infty} \gamma^t l(\boldsymbol{s}_t, \boldsymbol{a}_t) \right] \quad \text{s.t.} \quad b^i_j(\boldsymbol{o}^i) 
    \geq 0, \quad \forall i \in \mathcal{N}, \; j \in \mathcal{N}^i({\boldsymbol{s}_t}), \; \forall t \geq 0
\end{align}
where, $\pi(.|\boldsymbol{s}) \in \Pi$ is the joint state feedback policy of the agents i.e. $\pi: \mathcal{S} \rightarrow \Delta(\mathcal{A})$, $\Delta$ is the space of distributions over the support of the random variable.
It is noteworthy that this problem requires that the constraints are satisfied pointwise in time for every trajectory generated by $\pi$. 

\textbf{Cooperative setting with pointwise-in-time hard constraints vs. CMDP setting:}
As shown in \cite{zhao2024multi}, CMDPs address safety in fully cooperative multi-agent settings where agents share a common reward and seek to minimize the expected cumulative cost:
\begin{equation}  \min_{\pi}\mathbb{E}_{\tau \sim {\pi}} \left[\sum_{t=0}^{\infty} \gamma^t l\left(\boldsymbol{s}_t, \boldsymbol{a}_t\right)\right] \quad \text{s.t.}  \quad \mathbb{E}_{\tau \sim {\pi}}\left[\sum_{t=0}^{\infty} \gamma^t b_j^i\left(\boldsymbol{s}_t, \boldsymbol{a}_t^i\right)\right] \leq \alpha_j^i, ; \forall j \end{equation}
Here, $\pi: \mathcal{S} \rightarrow \Delta(\mathcal{A})$ denotes a state-feedback policy, and $\tau$ is a trajectory generated by $\pi$. While structurally similar to pointwise-in-time constraints in the single-agent setting \cite{pmlr-v202-wang23as}, the multi-agent CMDP formulation requires policies to depend on the joint state $\boldsymbol{s}_t$, making decentralized execution infeasible in partially observable environments and linearly increasing constraint complexity with the number of agents. Our formulation overcomes these limitations by enabling decentralized safety guarantees in partially observable, safety-critical multi-agent systems.

\section{Safe Hierarchical Reinforcement Learning (HMARL-CBF)}
We propose a novel Safe Hierarchical Multi-Agent Reinforcement Learning (HMARL) approach illustrated in figure \ref{fig:env_metadrive} to tackle the aforementioned problem in \eqref{eq:game_obj}. Specifically, we decompose the problem in \eqref{eq:game_obj} into a bi-level RL problem. The higher-level problem objective concerns the optimization of the joint performance of the agents. 
 The lower-level optimization problem addresses the safety of the agents defined using the second term in the stage cost in \eqref{stage_cost}. Our proposed HMARL method is based on the idea of skills which is defined as follows.

\textbf{Safe agent skills.} Inspired by \cite{SUTTON1999181, chuckgranger}, we define a safety-constrained skill $z^i$ for an agent $i$ as a seven-tuple: $(\mathcal{I}_{z}^i, \mathcal{J}^i_z(\boldsymbol{o}_{z,init}^i, s^e_{init}), T_{max}, \Phi_{z}^i, \mathcal{N}^i(\boldsymbol{o}^i), r_{z}^i(\boldsymbol{s}^i,\boldsymbol{a}^i), \pi_{z}^i)$ where $\mathcal{I}_{z}^i \subset \mathcal{S}^i$ is the initiation set for the skill respectively, $\boldsymbol{o}_{z,init}^{i} \in \mathcal{I}_z^i$ and $\boldsymbol{s}^e_{init} \in \mathcal{S}^e$ are the observation of agent $i$ and environment state at the time of initiation of the skill, $\mathcal{J}^i_z(\boldsymbol{o}_{init}^i, \boldsymbol{s}^e_{init}) \subseteq \mathcal{O}^i$ is the termination set of the skill, and  $\Phi_{z}^i:\mathcal{\mathcal{O}}^i \times \mathbb{N} \rightarrow\{0,1\}$ is the termination function as defined below:
\begin{equation}
\Phi_{z}^i(\boldsymbol{o}^i, t) =
\begin{cases} 
1, & \text{if } \boldsymbol{o}^i \in \mathcal{J}^i_z(\boldsymbol{o}_{z,init}^i, \boldsymbol{s}^e_{init}) \text{ or, } t \geq T_{\max} \\ 
0, & \text{otherwise}
\end{cases}
\end{equation}
A skill is terminated as a function of the initiation state of the skill $s_z^i$ in one of the two ways: i. either upon observing that the skill has been executed or, ii. interruption due to a timeout for exceeding the maximum allowed duration of execution, $T_{\textrm{max}} \in \mathbb{N}$. $\mathcal{N}^i(\boldsymbol{o}^i)$ is the set of safety constraints for agent $i$ that has to be satisfied during the execution of the skill. We define a reward function $r_z^i(\boldsymbol{s}^i,\boldsymbol{a}^i)$ corresponding to action $\boldsymbol{a}^i$ in state $\boldsymbol{s}^i$ which is used to learn the skill. Finally,  $\pi_{z}^i:\mathcal{O}^i \times \mathcal{J}^i_z \rightarrow \Delta(\mathcal{A}^i)$ is the safe skill policy expressed as a SOCP as follows:
\begin{align}
&\max_{\pi^i_z,k} \mathbb{E}_{\boldsymbol{s}^i_t,\boldsymbol{a}_t^i \sim \pi^i_z,k}\left[\sum_{t=0}^k r_z^i(\boldsymbol{s}^i_t, \boldsymbol{a}^i_t)|\boldsymbol{s}_0^i = \boldsymbol{s}^i_{init}\right] \\ \nonumber
&\text{subject to } 
b_i^j(\boldsymbol{o}^i_t) \ge 0, \ \boldsymbol{s}_k^i \in \mathcal{J}^i_z, \  k \le T_{max}, \  \forall b_i^j \in \mathcal{N}(\boldsymbol{o}^i_t), \quad \forall t = 0, \dots, k; 
\end{align}
The set of skills for an agent $i$ is defined as $\mathcal{Z}^i$, and the joint set of skills of the $N$ agents is defined as $\mathcal{Z} = \cup_{i\in\mathcal{N}}\mathcal{Z}^i$. 
We define a high-level policy for the $N$ agents over their set of skills denoted by $\pi_H: \mathcal{O} \rightarrow \Delta(\mathcal{Z})$ where $\pi_H \in \Pi_H$, which becomes the high-level RL problem. Subsequently, we define $\pi_{\boldsymbol{z}}: \mathcal{O} \times \mathcal{J}_{\boldsymbol{z}} \rightarrow \Delta(\mathcal{A})$ as the joint policy for the joint skill $\boldsymbol{z} \in \mathcal{Z}$ of the $\mathcal{N}$ agents, where $\mathcal{J}_{\boldsymbol{z}} = \cup_{i \in \mathcal{N}}\mathcal{J}_z^i$. Learning the safe skills of the agents becomes our low-level RL problem. Next, we detail our proposed HMARL-CBF method.
\subsection{Multi-Agent High-Level Policy Learning} 
As mentioned above, the high-level policy is formulated as a joint centralized optimization problem involving all the agents over their set of skills. The optimization problem can be solved using existing CTDE schemes, either using policy gradients or value decomposition. The higher level problem can be modeled as an MSMDP with the corresponding extrinsic reward for state $\boldsymbol{s}$ and skill $\boldsymbol{z}$ executed in $k$ steps using policy $\pi_{\boldsymbol{z}}$ is defined as: $R(\boldsymbol{s},\boldsymbol{z}, k) = \mathbb{E}_{\boldsymbol{a}_{t} \sim \pi_{\boldsymbol{z}}}[\sum_{t=0}^{k-1}\gamma^{t}r_H(\boldsymbol{s}_t,\boldsymbol{a}_t)|\boldsymbol{s}_0 = \boldsymbol{s}]$.
We call this extrinsic as it is task-specific and, hence, chosen to be the environment reward. This is revised by incorporating safety corresponding to \eqref{eq:game_obj} rendering the following extrinsic reward:
\begin{equation} \label{stage_cost}
r_H(\boldsymbol{s}, \boldsymbol{a}) = -\bigg[l(\boldsymbol{s}, \boldsymbol{a})-\sum_{i \in \mathcal{N}}\sum_{j \in \mathcal{N}^i({\boldsymbol{s}})}p_i^j\mathbb{I}(b_j^i(\boldsymbol{s}^i,\boldsymbol{s}^j)) b(\boldsymbol{s}^i,\boldsymbol{s}^j)\bigg] 
\end{equation}
where $\boldsymbol{s} \in \mathcal{S}, \boldsymbol{a} \in \mathcal{A}$, $p_i^j \in \mathbb{R}_{>0}$ and $\mathbb{I}$ is an indicator function defined as follows.
$$
\mathbb{I}(x)= \begin{cases}1, \text { if } x<0 \text { for some } i,  \\ 0,  \text{  otherwise. }\end{cases}
$$

The minimization problem is thus mapped to a maximization problem of accumulated reward. Then, the problem in \eqref{eq:game_obj} can be mapped to the following problem:
\begin{align}
        J(\pi_H, \pi_{\boldsymbol{z}}) &= \mathbb{E}_{\tau \sim {(\pi_H \circ \pi_{\boldsymbol{z}})}} [R_H(\tau)] = \mathbb{E}_{\tau \sim {(\pi_H \circ \pi_{\boldsymbol{z}})}} \left[ \sum_{\substack{t' = 0}}^{\infty}\gamma^{k_{t'}}R(\boldsymbol{s}_{t'},\boldsymbol{z}_{t'}, k_{t'})|\boldsymbol{z}_{t'}\sim \pi_H\right] \nonumber \\ 
        &=\mathbb{E} \left[ \sum_{\substack{t' = 0}}^{\infty} \mathbb{E}_{\boldsymbol{s}_t,\boldsymbol{a}_t \sim \pi_{\boldsymbol{z}_{t'}},k_{t'} }\bigg[\sum_{t=0}^{k_{t'}-1}\gamma^{t+k_{t'}}r_H(\boldsymbol{s}_t,\boldsymbol{a}_t)\bigg]\right] \nonumber \\
        \pi^*_H &= \arg\max_{\pi_H \in \Pi_H} J(\pi_H,\pi_{\boldsymbol{z}}) \label{HRL_opt_obj}
\end{align}
where $\Pi_H$ is the space of the high-level policies and $\tau$ is a trajectory sampled from the joint composition of $\pi_H$ and $\pi_{\boldsymbol{z}}$. 
The policy $\pi_H^*$ given a joint skills policy of the agents $\pi_{\boldsymbol{z}}$ optimizes the trajectories with respect to the reward in \eqref{stage_cost}. This problem can be solved using any existing on-policy or off-policy MARL algorithm \cite{iqbal2019actor, yu2022surprising, Gupta2019ProbabilisticVO}. 
If we parameterize the policy with $\theta$, the policy gradient of the parametrized policy $\pi_{H_{\theta}}$, as derived in \cite{williams1992simple} is given by:
\begin{align} \label{pg_high_level}       
\nabla_{\theta} J(\pi_{H_{\theta}},\pi_{\boldsymbol{z}}) &= \mathbb{E}_{\tau \sim {(\pi_{H_{\theta}} \circ\pi_{\boldsymbol{z}})}} \left[ \sum_{t=0}^{\infty} \nabla_{\theta} \log \pi_{H_{\theta}}(\boldsymbol{z}_t | \boldsymbol{s}_t) \right. 
 \left. \bigg(\sum_{t'=0}^{\infty} \gamma^{t'} R(\boldsymbol{s}_{t'}, \boldsymbol{z}_{t'}, k_{t'}) \bigg) \right]
\end{align}

Under a CTDE scheme, in order to execute the policies de-centrally, the agent policies/actor network (parameters) can be learned individually agent-wise conditioned on their observation (and history $h^i_{t-1}$) i.e., $\pi_{H_{\boldsymbol{\theta}}}^i(\text{ }|\boldsymbol{o}^i_t,h^i_{t-1})$, using a centralized critic network. Alternatively, the policy $\pi_{H_{\theta}}$ can be learnt by minimizing the following (Q-learning) loss using
a parametrization $Q_{\theta}$:
\begin{align} \label{qvalue_high_level}
   \mathcal{L}(\theta) &= \mathbb{E}_{\boldsymbol{s}_{t'},\boldsymbol{z}_{t'},\boldsymbol{s}_{t'+1},k_{t'}}[(R(\boldsymbol{s}_{t'},\boldsymbol{z}_{t'},k_{t'}) + \gamma^{k_{t'}} \max_{\boldsymbol{z} \in \Pi_{\boldsymbol{z}}} Q^{tot}_{\theta}(\boldsymbol{s}_{t'+1}, \boldsymbol{z}_{t'+1}) - Q^{tot}_{\theta}(\boldsymbol{s}_{t'}, \boldsymbol{z}_{t'}))^2]
\end{align}
where $Q^{tot}_{\theta}$ is the total state-action value as defined in \eqref{action_value_fcn} which can be learned using value decomposition based on Individual Global Max principle proposed in \cite{rashid2020monotonic, hong2022rethinking}.

\subsection{CBF-Based Individual Safe Skill Learning}
\label{skill-policy}
The low-level policy is associated with execution of the skills selected by the agents based on the high-level policy. We define an intrinsic step reward conditioned on any skill $z^i$ of any agent $i$ as $r_L^i(\boldsymbol{s}^i,\boldsymbol{a}^i|z^i)$ to incorporate intrinsic motivation. Notice, the safety constraints are included in the extrinsic step reward $r_H$. Additionally, to ensure safe execution of the skill, we formulate the low-level policy as a Constrained Optimal Control Problem (COCP) as follows:
\begin{align} \label{low_level_obj}
    \max_{\pi_{z_{t'}}^i \in \Pi_{z_{t'}}^i, k_{t'}} &\mathbb{E}_{\pi_{z_{t'}}^i, \boldsymbol{s}_{t',\dots,k_{t'}}^i,k_{t'}}\left[\sum_{t=t'}^{k_{t'}}r_L^i (\boldsymbol{s}^i_t,\boldsymbol{a}^i_t)|z^i_{t'}\right] \nonumber \\
    \text{subject to } & b_i^j(\boldsymbol{s}^i_t, \boldsymbol{s}^j_t) \ge 0 \ \ \forall \boldsymbol{s}^i_t, \boldsymbol{s}^j_t, j \in \mathcal{N}^i({\boldsymbol{s}_t}), t \in [t',\dots k_{t'}] \nonumber\\ 
    &\boldsymbol{s}_{t'}^i \in \mathcal{S}^i, \boldsymbol{s}_{k_{t'}} \in \mathcal{J}_{z_{t'}}^i. 
\end{align}
where $t'$ is the time the skill was initiated by agent $i$. In order to solve the problem we formulate a \emph{parameterized} constrained optimization problem based on CBFs at each time step $t$ that gives us the deterministic policy as a function of the observation conditioned on the skill $\mu^i(\boldsymbol{o}^i_t|z_{t'}^i;\boldsymbol{\phi}^i)$.  We include control Lyapunov functions (CLFs) $V_i^j$ as functions of the state $\boldsymbol{s}_t^i$ and the terminal state $\boldsymbol{s}_{k_{t'}}^i \in \mathcal{J}^i_{z_{t'}}$ to assist in learning the optimal trajectory corresponding to the skill and terminate by converging to the termination set with $T_{\text{max}}$.This is similar to stabilizing the system at the terminal state. Note that $\boldsymbol{s}_{k_t}$ need not be a terminal state, but any state-related Lyapunov function that aids in learning the skill. This results in an optimization problem with a quadratic cost function and linear constraint which makes it a Quadratic Program (QP) that requires low compute. 
The optimization problem corresponding to the deterministic policy $\mu^i(\boldsymbol{o}^i_t|z_{t'}^i;\boldsymbol{\phi}^i)$ is given below. 
\begin{align} \label{QP_CBF}
& \min_{\boldsymbol{a}_t^i \in \mathcal{A}^i, \boldsymbol{e}} \quad \boldsymbol{a}_t^{i\top} H(\boldsymbol{\phi}_H^i)\boldsymbol{a}_t^i + F^\top(\boldsymbol{\phi}_F^i)\boldsymbol{a}_t^i + \boldsymbol{\phi}_{e}^\top \boldsymbol{e} \\
    &\text{subject to } \nonumber \\
     &L_f^m b_i^j(\boldsymbol{s}^i_t, \boldsymbol{s}^j_t) + L_g L_f^{m-1} b_i^j(\boldsymbol{s}^i_t, \boldsymbol{s}^j_t) \boldsymbol{a}^i_t  \Omega(b_i^j(\boldsymbol{s}^i_t, \boldsymbol{s}^j_t); \boldsymbol{\phi}^{i,j}_b) + \alpha_m(\zeta_{m-1}(\boldsymbol{s}^i_t, \boldsymbol{s}^j_t); \boldsymbol{\phi}^{i,j}_b) \geq 0; \ 
     \forall j \in \mathcal{N}^i({\boldsymbol{s}_t}) \nonumber \\
    &L_f V_i^j(\boldsymbol{s}^i_t, \boldsymbol{s}_{k_t}^i) + L_g V^j_i(\boldsymbol{s}^i_t, \boldsymbol{s}_{k_t}^i) \boldsymbol{a}^i_t + \eta(\boldsymbol{s}^i_t, \boldsymbol{s}_{k_t}^i; \boldsymbol{\phi}_{v}^{i,j}) \leq e^{i,j}; \ \forall j = 1,2,\dots \nonumber
\end{align}
where \( \boldsymbol{\phi}^i = [\boldsymbol{\phi}_H^i, \boldsymbol{\phi}_F^i, \boldsymbol{\phi}_b^{i,1}, \dots, \boldsymbol{\phi}_b^{i,|\mathcal{N}^i({\boldsymbol{s}_t})|}, \boldsymbol{\phi}_{v}^{i,1}, \boldsymbol{\phi}_{v}^{i,2}, \dots] \) are the parameters of the optimization problem such that each term in the vector $\boldsymbol{\phi}^i$  belongs to $\mathbb{R}_{>0}$, $H \in \mathbb{R}^{\text{dim}(\mathcal{A}_i) \times \text{dim}(\mathcal{A}_i)}$, $F \in \mathbb{R}^{\text{dim}(\mathcal{A}_i)}$ and $\boldsymbol{e}$ is the slack term from the CLF constraints. The chosen quadratic objective, combined with the affine CBF constraints, forms a Quadratic Program (QP) at each time step \( t \), which is well suited for real-time safety-critical control applications.  

In our case, $H$ is chosen to be positive definite as the objective generally includes tracking error/control effort minimization, which makes the problem strictly convex. The parameters \( \boldsymbol{\phi}^i \) are learned as a function of the agent $i$'s observation $\boldsymbol{o}^i$ and parameterized by $\boldsymbol{\nu}^i$: $\boldsymbol{\phi}^i = \Gamma_{\boldsymbol{\nu}^i}(\boldsymbol{o}^i|z^i)$, where $\Gamma$ is differentiable w.r.t to $\boldsymbol{\nu}^i$. The function can be a deterministic or stochastic function of a continuous random variable $\epsilon$ using the reparameterization trick in \cite{xu2019variance}. With the reparameterization trick, the control policy becomes stochastic, thus allowing for use of stochastic policy gradient algorithms.

An agent's trajectory $\tau^i = \{\boldsymbol{s}^i_0, \boldsymbol{a}^i_0, \boldsymbol{s}^i_1, \boldsymbol{a}^i_1, \boldsymbol{s}^i_2, \boldsymbol{a}^i_2, \dots\} = \{\boldsymbol{s}^i_0, \boldsymbol{z}^i_0, \boldsymbol{s}^i_{k_0}, \boldsymbol{z}^i_{k_0}, \boldsymbol{s}^i_{k_1}, \boldsymbol{z}^i_{k_1}, \dots\}$ can be represented as a sequence of states and actions indexed by time as well as states and skills indexed by time on a different scale. Additionally, the forward invariant property of CBFs can guarantee that the trajectory resides in the safe set. In order to jointly learn with the high-level policy from the sampled trajectories, we map the safe skill learning problem for any agent $i$ in \eqref{low_level_obj} as a trajectory optimization problem with the policy $\pi^i_{z_{\boldsymbol{\nu}}}$ given by the following objective:
\begin{align} \label{revised_low_level_trj_obj}    
J^i(\pi_{H},\pi_{\boldsymbol{z}}^{-i}, \pi^{i}_{z_{\boldsymbol{\nu}}}) = \mathbb{E}_{\tau^i \sim (\pi_H \circ (\pi_{\boldsymbol{z}}^{-i}, \pi_{z_{\boldsymbol{\nu}}}^i))}[R_L^i(\tau^i)] = 
\ \mathbb{E} \left[\sum_{t=0}^{\infty}\gamma^t r_L^i(\boldsymbol{s}^i_t,\boldsymbol{a}^i_t)|z^i_{t}  \sim \pi_H^i, \boldsymbol{a}^i_t \sim \pi^i_{{z_t}_{\boldsymbol{\nu}}}\right]
\end{align}
We drop the policy parameters $\boldsymbol{\phi}$ and superscript $i$ from $\boldsymbol{\nu}$ to avoid overloading the symbols. 
Here, $\pi_z^{-i}$ is the joint skills policy of the other agents except agent $i$. we solve the optimization problem in \eqref{revised_low_level_trj_obj} by learning the parameters of the QP in \eqref{QP_CBF} that maximizes the total reward. The problem in \eqref{revised_low_level_trj_obj} can be solved using policy gradient (PG) methods as detailed in Appendix \ref{app:skill_learning}.

\begin{theorem} \label{th,:safety_guarantee}
    The satisfaction of the CBF constraints $b^i_j$ in \eqref{QP_CBF} guarantees safe execution of the skills, thus guaranteeing the safety of our proposed HMARL approach.
\end{theorem}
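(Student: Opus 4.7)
The plan is to reduce the statement to a direct application of Theorem \ref{cbf_theorem} (the HOCBF forward invariance result), applied per agent and per safety pair, and then lift this per-agent guarantee to the joint multi-agent trajectory produced by the hierarchical scheme. First I would fix an arbitrary agent $i \in \mathcal{N}$ and an arbitrary safety function $b^i_j$ with $j \in \mathcal{N}^i(\boldsymbol{s})$. By the problem formulation the constraint is initially satisfied, i.e.\ $\boldsymbol{s}^i(0) \in C_1^{i,j}$ with $C_1^{i,j}=\{\boldsymbol{s} : b^i_j(\boldsymbol{s}) \ge 0\}$, and the associated sets $C_k^{i,j}$ from \eqref{C set} are well-defined because $b^i_j$ has finite relative degree with respect to \eqref{eq_nonlinear}. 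The CBF inequality in \eqref{QP_CBF} is exactly the HOCBF constraint \eqref{HOCBF} instantiated for $b^i_j$, written with the learned class-$\mathcal{K}$ coefficients in $\boldsymbol{\phi}_b^{i,j}$. Hence any control $\boldsymbol{a}^i_t$ that is feasible for the QP satisfies this inequality for every active pair $(i,j)$ pointwise in time.

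Next I would invoke Theorem \ref{cbf_theorem} directly: since the QP-derived policy $\mu^i(\boldsymbol{o}^i_t \mid z^i_{t'}; \boldsymbol{\phi}^i)$ is Lipschitz continuous in $\boldsymbol{o}^i_t$ (the QP is strictly convex because $H$ is positive definite, and the learned parameters $\boldsymbol{\phi}^i = \Gamma_{\boldsymbol{\nu}^i}(\boldsymbol{o}^i \mid z^i)$ depend smoothly on $\boldsymbol{o}^i$), and since it satisfies \eqref{HOCBF} for $b^i_j$ at every $t$ throughout the execution of the skill, the set $C_1^{i,j}$ is forward invariant for agent $i$'s closed-loop dynamics \eqref{eq_nonlinear}. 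Therefore $b^i_j(\boldsymbol{s}^i(t),\boldsymbol{s}^j(t)) \ge 0$ for all $t$ during which the pair remains in $\mathcal{N}^i(\boldsymbol{s}_t)$, which is precisely the safety specification of skill $z^i$ in its SOCP definition. Taking a union over all $j \in \mathcal{N}^i(\boldsymbol{s}_t)$ keeps $\boldsymbol{s}^i(t) \in \bigcap_{j} C_1^{i,j}$, and taking a union over all agents $i \in \mathcal{N}$ yields pointwise-in-time satisfaction of the joint safety constraint in \eqref{eq:game_obj}.

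Finally I would address the two subtleties introduced by the hierarchy and the multi-agent asynchronous execution. (a) Skill switching: whenever the high-level policy $\pi_H$ selects a new skill for agent $i$ at a decision epoch, the low-level QP is re-instantiated, but the CBF constraints $b^i_j$ are skill-independent (they depend only on the physical pair $(i,j)$), so forward invariance carries across skill boundaries without interruption. When a new neighbor enters $\mathcal{N}^i(\boldsymbol{s}_t)$, the standing assumption that constraints are initially satisfied on introduction provides the initial condition needed to restart the forward-invariance argument for that pair. (b) Asynchronous $\tau_{\text{continue}}$ updates: at any instant each agent $i$ is executing some skill, and its control $\boldsymbol{a}^i_t$ is produced by its own QP, so the per-agent forward-invariance argument is unaffected by the timing of other agents' skill terminations. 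Combining these yields safe execution of every skill, and hence the global safety of HMARL-CBF, completing the proof.

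The main obstacle I anticipate is not the CBF invariance step itself but establishing feasibility of the QP in \eqref{QP_CBF} under bounded controls $\boldsymbol{a}^i \in \mathcal{A}^i$ with multiple simultaneously active CBF constraints; strictly speaking the theorem hides this behind the phrase \emph{satisfaction of the CBF constraints}, so I would either assume feasibility as part of the hypothesis (justified by the CLF slack variable $\boldsymbol{e}$ which softens the performance-side constraints but leaves the safety constraints hard) or argue it from a mild assumption on the class-$\mathcal{K}$ functions $\alpha_i$ and the neighborhood set $\mathcal{N}^i(\boldsymbol{s})$ being compatible with the control authority $\mathcal{A}^i$.
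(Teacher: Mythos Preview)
Your proposal is correct and follows the same approach as the paper: invoke the initial satisfaction of the constraints together with the HOCBF forward-invariance result (Theorem~\ref{cbf_theorem}) to conclude that each safety set $C_1^{i,j}$ remains forward invariant, hence the skills and the overall hierarchical scheme are safe. The paper's own proof is essentially a two-sentence version of your argument and does not spell out the skill-switching, asynchronous-update, or QP-feasibility points you raise; your treatment is strictly more careful than what the paper provides.
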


\begin{proof}
    Given, the constraints $b_i^j$'s are satisfied at the initial time, the satisfaction of the CBF constraints makes the corresponding safety sets forward invariant per Theorem \eqref{cbf_theorem}, thus guaranteeing their satisfaction at all future times. This makes the policies corresponding to the skills of the agents safe, thus, guaranteeing the safety of our proposed HMARL approach. 
\end{proof}

It should be noted that robust CBFs can be used in the presence of model uncertainty following \cite{choi2021robust} to achieve similar safety guarantees. Besides that, we detail how the high-level task specific reward can be incorporated to the low-level skill learning to align the low level learning to the task.

We follow \cite{amos2017optnet} by applying the Karush-Kuhn-Tucker condition on the Lagrangian to derive the gradient of the state feedback policy with respect to its parameters. 
Specifically, let $\lambda$ denote the dual variables on the HOCBF and CLF constraints, let $D(\cdot)$ create a diagonal matrix from a vector, and let $\boldsymbol{\mu}^*, \lambda^*$ denote the optimal solutions of $\boldsymbol{\mu}$ and $\lambda$, respectively. We can then write $d_{\boldsymbol{\mu}}$ and $d_\lambda$ in the form:
\begin{equation}
\label{diff}
\left[\begin{array}{c}
d_{\mu} \\
d_\lambda
\end{array}\right]=\left[\begin{array}{cc}
H & G^T D\left(\lambda^*\right) \\
G & D\left(G \boldsymbol{a}^*-h\right)
\end{array}\right]^{-1}\left[\begin{array}{c}\mathbf{1}_{\text{dim}(\mathcal{A}_i)} \\
0
\end{array}\right]
\end{equation}
where $G, h$ are concatenated by $G^j_b, G^j_v, h^j_b, h^j_v$,
\begin{align}
G^j_b&=  -L_g L_f^{m-1} b^j_i(\boldsymbol{s}^{i},\boldsymbol{s}^j); \ \ j \in \mathcal{N}^i({\boldsymbol{s}}) \nonumber\\ 
G^j_v&=  -L_g V^j_i(\boldsymbol{s}^{i},\boldsymbol{s}^j;\boldsymbol{\phi}_v^{i,j}); \ \ j = 1,2,\dots\nonumber \\
h_b^j&= L_f^m b_i^j(\boldsymbol{s}^i,\boldsymbol{s}^j)+\Omega\left(b_j^i(\boldsymbol{s}^i,\boldsymbol{s}^j);\boldsymbol{\phi}^{i,j}_b\right)+ \alpha_m\left(\zeta_{m-1}\left(\boldsymbol{s}^i, \boldsymbol{s}^j;\boldsymbol{\phi}^{i,j}_b\right)\right);   j \in \mathcal{N}^i({\boldsymbol{s}}) \nonumber \\
h_b^j&= L_f V_i^j(\boldsymbol{s}^i,\boldsymbol{s}^j)+\eta(\boldsymbol{s}^i, \boldsymbol{s}^j;\boldsymbol{\phi}^{i,j}_v); \ \ j = 1,2,\dots 
\end{align}
As the control bounds in \eqref{QP_CBF} are not trainable, they are not included in $G$ and $h$. Then, the relevant gradients with respect to all the parameters are given by
\begin{align}\label{par_deriv}
\nabla_H\boldsymbol{\mu}=\frac{1}{2}\left(d_{\boldsymbol{u}} \boldsymbol{u}^T+\boldsymbol{u} d_{\boldsymbol{u}}^T\right), & \nabla_F \boldsymbol{\mu}=d_{\boldsymbol{u}} \\ \label{par_deriv_2}
\nabla_G \boldsymbol{\mu}=D\left(\lambda^*\right)\left(d_\lambda \boldsymbol{u}^T+\lambda d_{\boldsymbol{u}}^T\right), & \nabla_h \boldsymbol{\mu}=-D\left(\lambda^*\right) d_\lambda
\end{align}
With the hierarchical composition of the policies $(\pi_H\circ\pi_{\boldsymbol{z}})$, we define the augmented state $\boldsymbol{\hat{s}} = [\boldsymbol{s}^i,\boldsymbol{s}^{-i}, \boldsymbol{z},\boldsymbol{s}_{\boldsymbol{z},init}]$ where $\boldsymbol{s}^{-i}$ is the state of the agents other than agent $i$ and $\boldsymbol{s}_{\boldsymbol{z},init}$ is the joint state of the agents at the time of the initiation of the joint skill $\boldsymbol{z}$. The initiation state is included as the joint termination of the skill depends on the initiation state. The state and action value functions for agent $i$ corresponding to reward $r_L^i$, skill policy $\pi_{z^i}$ (given the high level policy $\pi_H$ and the skill policy other agents $\pi_{\boldsymbol{z}^{-i}}$)  are defined as follows:
\begin{align}
    V^{\pi_{z^i}}(\boldsymbol{\hat{s}}) &= \mathbb{E}[r_L^i(\boldsymbol{\hat{s}}_0,\boldsymbol{a}^i_0|\boldsymbol{\hat{s}}_0 = \boldsymbol{\hat{s}}) + \mathbb{E}_{\boldsymbol{\hat{s}}_1}[Q(\boldsymbol{\hat{s}}_1,\boldsymbol{a}^i_1)]] \nonumber\\
    &= \mathbb{E}[r_L^i(\boldsymbol{s}^i_0,\boldsymbol{a}^i_0|\boldsymbol{s}^i_0 = \boldsymbol{s}^i, z^i) + \mathbb{E}_{\boldsymbol{\hat{s}}_1}[Q(\boldsymbol{\hat{s}}_1,\boldsymbol{a}^i_1)]]
\end{align}

The state-action value function for agent \( i \), given the hierarchical policy composition \( \pi_H \circ \pi_{\boldsymbol{z}} \), is defined as:
\begin{align}
Q^{\pi_{z^i}}(\boldsymbol{\hat{s}}, \boldsymbol{a}^i) 
= \mathbb{E}\left[ r_L^i(\boldsymbol{\hat{s}}, \boldsymbol{a}^i) 
+ \gamma \, \mathbb{E}_{\boldsymbol{\hat{s}}'} \left[ V^{\pi_{z^i}}(\boldsymbol{\hat{s}}') \right] \right]
\end{align}

The corresponding advantage function is given by:
\begin{align}
A^{\pi_{z^i}}(\boldsymbol{\hat{s}}, \boldsymbol{a}^i) 
= Q^{\pi_{z^i}}(\boldsymbol{\hat{s}}, \boldsymbol{a}^i) 
- V^{\pi_{z^i}}(\boldsymbol{\hat{s}})
\end{align}
Based on the above definition of value functions and advantage function, as well the gradient of QP controller output with respect to its parameters, standard on-policy and off-policy algorithms can be used to derive the policy gradient to learn the controller parameters.


\textbf{High-level task specific reward to low level learning.} The low-level policies should be such that they optimize the entire trajectory given by the problem in \eqref{eq:game_obj}. Hence, in addition to introducing intrinsic motivation through the low-level reward, inspired by \cite{li2019hierarchical}, we introduce an advantage term to the low-level reward resulting in the following revised low-level reward:

    \begin{equation}\label{revised_low_reward}
        \tilde{r}^i_{L}(\boldsymbol{s}_t^i,\boldsymbol{a}_t^i|z_{t'}^i) = \lambda \frac{A^{\pi_H}(\boldsymbol{s}_{t'},\boldsymbol{z}_{t'})}{\mathcal{N}} + (1-\lambda) r^i_{L}(\boldsymbol{s}_t^i,\boldsymbol{a}_t^i|z_{t'}^i)
    \end{equation}

where $\lambda \in [0,1]$ and $t'\le t$ is the time when the joint skill at time $t$ was initiated. The high-level advantage $A^{\pi_H}(\boldsymbol{s}_{t'}, \boldsymbol{a}_{t'})$ is defined as below:
\begin{align}
&A^{\pi_H}(\boldsymbol{s}_{t'},\boldsymbol{z}_{t'}) = Q^{\pi_H}(\boldsymbol{s}_{t'},\boldsymbol{z}_{t'}) - V^{\pi_H}(\boldsymbol{s}_{t'})
\end{align}
where $Q^{\pi_H}(\boldsymbol{s}_{t'}\boldsymbol{z}_{t'})$ and $V^{\pi_H}(\boldsymbol{s}_{t'})$ are as defined in \eqref{action_value_fcn} and \eqref{value_fcn} respectively. The parameter $\lambda$ can be used to balance between skill learning and optimization of the joint behavior of the agents. From \cite{li2019hierarchical} we have thatfor high level policies the following holds:
\begin{equation}
J(\tilde{\pi}_H)=J(\pi_H)+\mathbb{E}_{\boldsymbol{s}_{t'}, \boldsymbol{z}_{t'} \sim \pi_H}\left[\sum_{\substack{t' = 0}} \gamma^{t'} A^{\pi_H}\left(\boldsymbol{s}_{t'}, \boldsymbol{z}_{t'}\right)\right]
\end{equation}
\begin{proposition}
   Given the low level reward in \eqref{revised_low_reward}, the revised low level learning objective $\tilde{J}^i$ for agent $i$ can be expressed as: 
   \begin{equation}
       \tilde{J}^i(\pi_H,\pi_{\boldsymbol{z}^{-i}}, \pi_{z^i}) \approx \frac{\lambda}{\mathcal{N}(1-\gamma)}\mathbb{E}\left[\sum_{\substack{t' = 0}} {\gamma^{t'}} A^{\pi_H}\left(\boldsymbol{s}_{t'}, \boldsymbol{z}_{t'}\right)\right] + (1-\lambda)J^i(\pi_H, \pi_{\boldsymbol{z}^{-i}},\pi_{z^i})
   \end{equation}
\end{proposition}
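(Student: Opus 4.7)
The plan is to substitute the definition of the augmented reward into the low-level objective, use linearity of expectation to split it into two telescoping pieces, and handle each piece separately. Starting from
\[
\tilde{J}^i(\pi_H,\pi_{\boldsymbol{z}^{-i}}, \pi_{z^i})
= \mathbb{E}\!\left[\sum_{t=0}^{\infty}\gamma^{t}\,\tilde r^i_L(\boldsymbol{s}^i_t,\boldsymbol{a}^i_t\mid z^i_{t'})\right],
\]
substitution of \eqref{revised_low_reward} and linearity of expectation immediately isolate a clean ``intrinsic'' piece equal to $(1-\lambda)\,J^i(\pi_H,\pi_{\boldsymbol{z}^{-i}},\pi_{z^i})$ via \eqref{revised_low_level_trj_obj}, leaving only the advantage piece
\[
\frac{\lambda}{\mathcal{N}}\,\mathbb{E}\!\left[\sum_{t=0}^{\infty}\gamma^{t}\,A^{\pi_H}(\boldsymbol{s}_{t'},\boldsymbol{z}_{t'})\right]
\]
to be transformed into the form claimed in the proposition.

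The key structural fact I would exploit is that, by construction, the high-level advantage $A^{\pi_H}(\boldsymbol{s}_{t'},\boldsymbol{z}_{t'})$ appearing in $\tilde r^i_L$ is \emph{piecewise constant in $t$}: within a single skill execution starting at the decision epoch $T_k$ and lasting $\Delta_k = T_{k+1}-T_k$ primitive steps, we have $t'(t)=T_k$ for every $t\in[T_k,T_{k+1}-1]$. Grouping the infinite sum by skill index $k$ and factoring out the common advantage term yields
\[
\sum_{t=0}^{\infty}\gamma^{t}\,A^{\pi_H}(\boldsymbol{s}_{t'},\boldsymbol{z}_{t'})
= \sum_{k=0}^{\infty} A^{\pi_H}(\boldsymbol{s}_{T_k},\boldsymbol{z}_{T_k})\sum_{t=T_k}^{T_{k+1}-1}\gamma^{t}
= \sum_{k=0}^{\infty} \gamma^{T_k}\,A^{\pi_H}(\boldsymbol{s}_{T_k},\boldsymbol{z}_{T_k})\cdot\frac{1-\gamma^{\Delta_k}}{1-\gamma},
\]
which is an exact identity up to this point.

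The final step is the approximation indicated by the $\approx$ in the proposition: replace the geometric factor $(1-\gamma^{\Delta_k})/(1-\gamma)$ by its limit $1/(1-\gamma)$. This is the step where I expect the only real subtlety to live, and it is the main obstacle of the argument. The approximation is exact in the limit of long skill durations (or, equivalently, strongly discounted horizons with $\gamma^{\Delta_k}\to 0$), and the residual error is bounded by $\tfrac{1}{1-\gamma}\mathbb{E}\bigl[\sum_k \gamma^{T_k+\Delta_k}|A^{\pi_H}(\boldsymbol{s}_{T_k},\boldsymbol{z}_{T_k})|\bigr]$, which is uniformly small whenever the advantages are bounded and $\mathbb{E}[\gamma^{\Delta_k}]$ is small. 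Accepting this approximation, and identifying the sum over decision epochs $k$ with the authors' notational convention $\sum_{t'=0}^{\infty}\gamma^{t'}A^{\pi_H}(\boldsymbol{s}_{t'},\boldsymbol{z}_{t'})$ (where $t'$ ranges over skill-initiation times), I obtain
\[
\frac{\lambda}{\mathcal{N}}\,\mathbb{E}\!\left[\sum_{t=0}^{\infty}\gamma^{t}A^{\pi_H}(\boldsymbol{s}_{t'},\boldsymbol{z}_{t'})\right]
\;\approx\;
\frac{\lambda}{\mathcal{N}(1-\gamma)}\,\mathbb{E}\!\left[\sum_{t'=0}^{\infty}\gamma^{t'}A^{\pi_H}(\boldsymbol{s}_{t'},\boldsymbol{z}_{t'})\right].
\]
Adding back the intrinsic $(1-\lambda)J^i$ term completes the derivation of the claimed decomposition.
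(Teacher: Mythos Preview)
Your proposal is correct and follows essentially the same approach as the paper: substitute the augmented reward, split by linearity into the intrinsic $(1-\lambda)J^i$ term and the advantage term, sum the geometric series within each skill to obtain the factor $\gamma^{T_k}(1-\gamma^{\Delta_k})/(1-\gamma)$, and then drop $\gamma^{\Delta_k}$ to arrive at the claimed approximation. The only cosmetic difference is that the paper starts directly from the nested double sum over decision epochs $t'$ and intra-skill steps, whereas you begin with the flat primitive-time sum and regroup; your explicit error bound for the approximation step is a nice addition that the paper omits.
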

\begin{proof} 
Based on \ref{revised_low_reward}, we can write our low level policy's learning objective for agent $i$ as the following: 
\begin{equation}
\begin{aligned}
 &\tilde{J}^i(\pi_H,\pi_{\boldsymbol{z}^{-i}}, \pi_{z^i}) = \mathbb{E}\left[\sum_{\substack{t' = 0}} \mathbb{E}_{\boldsymbol{s}_t^i,\boldsymbol{a}_t^i \sim \pi_{z^i_{t'}}}\left[\sum_{t=0}^{k_{t'}-1} \gamma^{t+k_{t'}} \tilde{r}^i_{L}(\boldsymbol{s}_t^i,\boldsymbol{a}_t^i|z_{t'}^i) \right]\right] \\
 &= \mathbb{E}\left[\sum_{\substack{t' = 0}} \mathbb{E}_{\boldsymbol{s}_t^i,\boldsymbol{a}_t^i \sim \pi_{z^i_{t'}}}\left[\sum_{t=0}^{k_{t'}-1} \gamma^{t+k_{t'}} \left(\lambda\frac{A^{\pi_H}\left(\boldsymbol{s}_{t'}, \boldsymbol{z}_{t'}\right)}{\mathcal{N}} + (1-\lambda) \tilde{r}^i_{L}(\boldsymbol{s}_t^i,\boldsymbol{a}_t^i|z_{t'}^i)\right)\right]\right] \\
& = \frac{\lambda}{\mathcal{N}}\mathbb{E}\left[\sum_{\substack{t' = 0}} \frac{\gamma^{t'}(1-\gamma^{k_{t'}})}{1-\gamma} A^{\pi_H}\left(\boldsymbol{s}_{t'}, \boldsymbol{z}_{t'}\right)\right] \\
& \quad + (1-\lambda) \mathbb{E}\left[\sum_{\substack{t' = 0}} \mathbb{E}_{\boldsymbol{s}_t^i,\boldsymbol{a}_t^i \sim \pi_{z^i_{t'}}}\left[\sum_{t=0}^{k_{t'}-1} \gamma^{t+k_{t'}}  r^i_{L_A}(\boldsymbol{s}_t^i,\boldsymbol{a}_t^i|z_{t'}^i)\right] \right] \\
& \approx \frac{\lambda}{\mathcal{N}(1-\gamma)}\mathbb{E}\left[\sum_{\substack{t' = 0}} \gamma^{t'} A^{\pi_H}\left(\boldsymbol{s}_{t'}, \boldsymbol{z}_{t'}\right)\right] + (1-\lambda)J^i(\pi_H, \pi_{\boldsymbol{z}^{-i}},\pi_{z^i})
\end{aligned}
\end{equation}
\end{proof}
Similarly, the low level reward for every agent can be revised as above to align skill learning process with the task learning process, as the first term is related to the high level policy return. 

\section{Experiments}

\subsection{Environment Details}
We validate our proposed method on traffic scenarios involving complex road networks with multiple agents that must ensure safety concerning neighboring agents while navigating toward their destinations. We use the MetaDrive simulator \cite{li2022MetaDrivecomposingdiversedriving} to implement, validate, and compare our method against existing baselines in a cooperative setting, where agents (autonomous vehicles) collaborate to improve the overall traffic efficiency. 

The simulator includes five environments for multi-agent experiments. We implement our algorithm in four of the originally proposed environments and design a new environment to simulate an on-ramp merging roadway, replacing the parking lot environment. Since safety is less challenging in parking lots due to low vehicle speeds, the problem can be formulated and solved as a static optimization problem with a fixed number of vehicles and parking spaces. The five environments from the METADRIVE simulator are illustrated in the figure \ref{fig:metadrive-envs}. As mentioned these environments involve regions where agents can cross path with other agents, Hence, it is necessary for the agents to cooperate but at the safe time stay safe with each other to achieve the task i.e. reach destination successfully and timely.  In all five environments used in our experiments, the vehicles are initialized at random spawn points, and assigned destinations to each of them randomly. The agents are terminated in three situations: reaching the destinations (deemed as successful), crashing with others or driving out of the road (deemed as failure). Once an existing vehicle is terminated new vehicles are spawned immediately to ensure continuous traffic flow. The terminated vehicles remains static for 10 steps (2s) in the original positions, creating impermeable obstacles. If a vehicle crashes into other terminated vehicles before it is removed from the environment, it is also considered as failure.
\label{appd:envs}
\begin{figure}[h!t]
    \centering
    \includegraphics[width=1\linewidth]{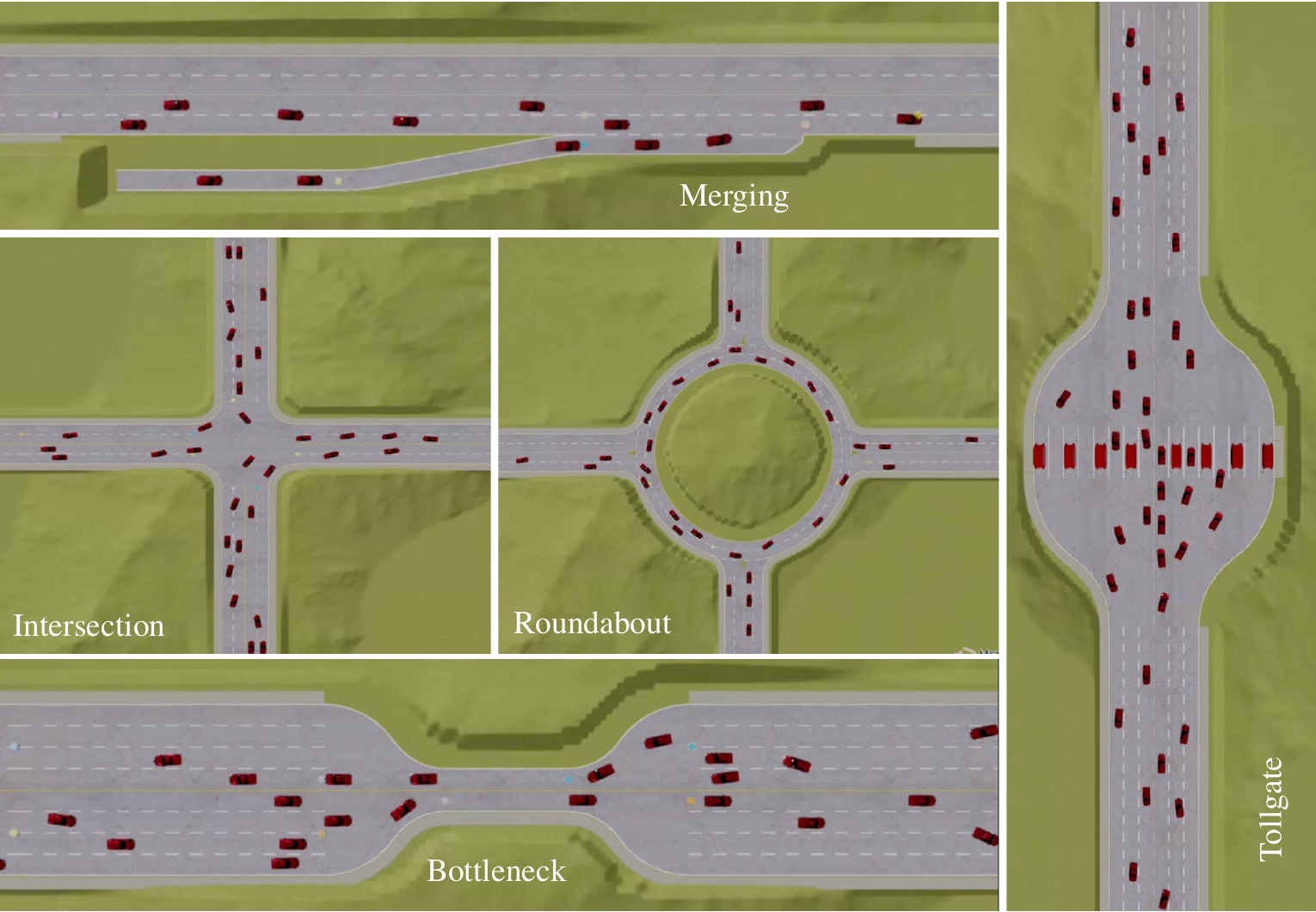}
    \caption{Illustration of the various METADRIVE environments.}
    \label{fig:metadrive-envs}
\end{figure}

In addition, we evaluate our method on a suite of lidar-based environments introduced in \cite{zhang2025discrete}, as illustrated in Figure~\ref{fig:additional_envs}. All environments are partially observable, with agents equipped with lidar sensors. In the \textit{Target} environments, each agent is assigned a fixed goal and must reach it while avoiding collisions with other agents and obstacles. The primary distinction between the \textit{Target} and \textit{Target-Bicycle} environments lies in the underlying agent dynamics; further details are provided in Figure~\ref{fig:additional_envs}. In the \textit{Spread} environment, agents must cooperate to cover all goal locations while avoiding both obstacles and one another.

\begin{figure}[h]
    \centering
    \includegraphics[width=0.8\linewidth]{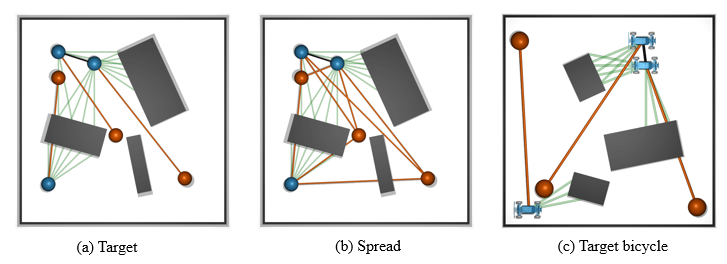}
    \caption{Illustration of the lidar based environments which include: (a) target environment, (b) spread environment and (c) target bicycle environment.}
    \label{fig:additional_envs}
\end{figure}

\subsection{Experimental Details and Additional Results}
\label{exp_details}
We begin this section by presenting the details of the training, evaluation and ablation experiments. Besides that, we include results for the generalization experiments used to validate our proposed method. 

\subsection{Implementation Details}
We have implemented our proposed method using both on-policy and off-policy algorithms. The on-policy implementation was done using MAPPO \cite{yu2022surprising} with PPO \cite{schulman2017proximal} for the skills. The off-policy implementation used QMIX \cite{rashid2020monotonic} for the higher level policy and SAC \cite{haarnoja2018soft} for learning the agent skills.  
The extrinsic reward was chosen to be same as that defined in MetaDrive (which includes the energy, time, driving reward, lane deviation, collision penalty) of all the AVs in the environment. Besides that we include some regularization terms corresponding to our skills to avoid overfitting to any specific skill(s). The algorithms were implemented using RLLib; an open-source framework for ML. Further details about the implementation can be found in \ref{exp_details}.  

\subsubsection{Algorithm details}
Our method can be implemented using both existing on-policy and off-policy RL algorithms. The bilevel RL problems presented in \eqref{HRL_opt_obj} and \eqref{revised_low_level_trj_obj}, respectively, should be solved sequentially. However, as previous works illustrated \cite{kulkarni2016hierarchical}, jointly optimizing both policies by updating their corresponding parameters alternatively renders similar performances, which is consistent with our observation from the experiments. Based on this observation, the algorithm is presented in Algorithm \ref{alg:cbf-hmarl}. The high-level policy $\pi_{H_{\boldsymbol{\theta}}}$ is decomposed into agent-level parameterized policies $\pi^i_{H_{\boldsymbol{\theta}^i}}$ to learn decentralized policies.
\begin{algorithm}[h]
   \caption{Hierarchical Multi-Agent Reinforcement Learning with CBFs}
   \label{alg:cbf-hmarl}
\begin{algorithmic}[1]
   \STATE Initialize randomly the high-level policy parameters $\boldsymbol{\theta}$ and parameters $\boldsymbol{\nu}$ of the low-level policy parameters $\boldsymbol{\phi}$   \hfill //network parameter sharing done between agents.
   \STATE Set $Grouping$ flag.
   \FOR{each episode}
      \STATE $z_0^i \sim \pi_{H_{\boldsymbol{\theta}}}(z^i|\boldsymbol{o}_0^i), \ \forall i \in \mathcal{N}$  \hfill // sample initial skill
      \STATE Set data buffer $\mathcal{D} = \{\}$
      \FOR{$t \gets 0, \ldots, T$}
         \STATE Get action for agent $i$ based on the selected skill $a_t^i = \mu^i(\boldsymbol{o}^i_t|z_{t'}^i;\boldsymbol{\phi}_{\boldsymbol{\nu}}) \ \forall i \in \mathcal{N}$ \hfill // sample primitive action
         \STATE Get $s_{t+1}$ and $r_H$ from environment and $r_L^i \forall i \in \mathcal{N}$ 
          \STATE Sample skill for agent $i$, $z_{t+1}^i \sim \pi^i_{H_{\boldsymbol{\theta}}}(z^i|\boldsymbol{o}_t^i), \ \forall i \in \mathcal{N}_z(t)$  \hfill // $\mathcal{N}_z(t)$ is the list of agents who have to update their skills based on termination scheme used.
          \IF{$\sim Grouping$}
          \STATE $\mathcal{D} = \mathcal{D} \cup \{\boldsymbol{s}_t,\{\boldsymbol{o}_t^i,z_t^i,\boldsymbol{a}_t^i,\boldsymbol{\phi}_t^i, r_L^i, \boldsymbol{o}_{t+1}^i\}_{i \in \mathcal{N}},r_H,\boldsymbol{s}_{t+1}\}$
          \ELSE
            \STATE $\mathcal{D} = \mathcal{D} \cup \{\boldsymbol{s}_t^{G_j},\{\boldsymbol{o}_t^i,z_t^i,\boldsymbol{a}_t^i,\boldsymbol{\phi}_t^i,r_L^i,\boldsymbol{o}_{t+1}^i\}_{i\in G^j},r_H^{G^j},\boldsymbol{s}_{t+1}^{G_j}\}_{\forall j \in \mathcal{N}_G}$
          \ENDIF
      \ENDFOR
      \STATE Update high level policy network $\boldsymbol{\theta}$ according to \eqref{pg_high_level}, or, \eqref{qvalue_high_level}, using sampled data.
      \STATE Update low level policy network parameters $\boldsymbol{\nu}$ based on \eqref{par_deriv} and \eqref{par_deriv_2}, using sampled data
   \ENDFOR
\end{algorithmic}
\end{algorithm}

We implement the algorithm using two different approaches inspired by existing works in the area of MARL. Notably, we implemented the algorithm in the following ways. It is important to note that under all these approaches, the hierarchical policy is learned agent-wise to allow for decentralized execution.
\begin{enumerate}
    \item Joint policy learning: This approach involves solving \eqref{HRL_opt_obj} by optimizing the high-level policies of the agents jointly by learning the policy parameters $\boldsymbol{\theta}^i$s.
    \item Group policy learning: This approach is inspired by the idea of learning the value function by decomposing the joint value function into the value function of groups of agents. In our fully cooperative setting, the total step reward can be composed as a sum of the rewards of the individual agents, i.e. $r_H(\boldsymbol{s},\boldsymbol{a}) = \sum_{i=1}^Nr_H^i(\boldsymbol{s}^i,\boldsymbol{a}^i)$. Let $\{G^j\}_{j=1}^{N_G}$ represent the groups with each $G^j \subset \mathcal{N}$ such that, $G^j \cap G^{j'} = \emptyset$ and  $\mathcal{N} = \cup_{i = 1}^{{N}_G} G^i$. We can define the group reward as the sum over the individual reward of the agents in the group Then, under this approach we can express the objective in \eqref{HRL_opt_obj} as below (We only include the necessary symbols in the expectation to avoid symbol overload): 
    \begin{align}
    \label{eq:policy_grouping}
    J(\pi_H,\pi_{\boldsymbol{z}}) &=\mathbb{E} \left[ \sum_{\substack{t' = 0}}^{\infty} \mathbb{E}_{\boldsymbol{a}_t \sim \pi_{\boldsymbol{z}_{t'}} }\bigg[\sum_{t=0}^{k_{t'}-1}\gamma^{t+k_{t'}}r_H(\boldsymbol{s}_t,\boldsymbol{a}_t)\bigg]\right] \nonumber\\
    &=\mathbb{E} \bigg[ \sum_{\substack{t' = 0}}^{\infty} \mathbb{E}_{\boldsymbol{a}_t \sim \pi_{\boldsymbol{z}_{t'}}, }\bigg[\sum_{t=0}^{k_{t'}-1}\gamma^{t+k_{t'}} \sum_{j = 1}^{N_G} \underbrace{\sum_{i = 1}^N r_H^i(\boldsymbol{s}_t^i,\boldsymbol{a}^i_t) \mathbb{I}[i \in G^j]}_{=r_H^{G^j}(\boldsymbol{s}_t^{G^j},\boldsymbol{a}_t^{G^j})}\bigg]\bigg] \nonumber\\
    & = \sum_{j = 1}^{N_G}\mathbb{E}\bigg[\sum_{\substack{t' = 0}}^\infty \mathbb{E}_{\boldsymbol{a}_{t}^{G^j} \sim \pi_{\boldsymbol{z}_{t'}}}\bigg[\sum_{t=0}^{k_{t'}-1}r_H^{G^j}(\boldsymbol{s}_t^{G^j},\boldsymbol{a}_t^{G^j})\bigg]\bigg]
\end{align}
    Here, $\boldsymbol{s}^{G^j}$ and $\boldsymbol{a}^{G^j}$ are the states and actions of the agents in group $G^j$,   The agent skill policies are learnt independently of each other. Hence. the joint skills policies can be written as $\pi_{\boldsymbol{z}} = \prod_{i=1}^N \pi_{z^i} = \prod_{j = 1}^{N_G}\prod_{i\in G^j} \pi_{z^i} = \prod_{j=1}^{N_G}\pi_{\boldsymbol{z}^{G^j}}$ where $\pi_{\boldsymbol{z}^{G^j}}$ is the joint skills policy of the agents in the group $G^j$. Then, \eqref{eq:policy_grouping} can be written as:

    \begin{align*}
    J(\pi_H,\pi_{\boldsymbol{z}}) & = \sum_{j = 1}^{N_G}\mathbb{E}\bigg[\sum_{\substack{t' = 0}}^\infty \mathbb{E}_{\boldsymbol{a}_{t}^{G^j} \sim \pi_{\boldsymbol{z}_{t'}^{G^j}}}\bigg[\sum_{t=0}^{k_{t'}-1}r_H^{G^j}(\boldsymbol{s}_t^{G^j},\boldsymbol{a}_t^{G^j})\bigg]\bigg]  \\
    &=J^{1}(\pi^{G^{1}}_H) + \dots +J^{{\mathcal{N}_G}}(\pi^{G^{N_G}}_H)
    \end{align*}
  where, $k_{t'}$ now becomes the time at which any agent in the group changes selects/switches to a new skill. We drop the low-level policy corresponding to skills for brevity. This approach enables better scalability as the overall problem is decomposed into one that learns policies for the groups; however, this comes at the expense of performance as the original optimization problem involves all the agents.
\end{enumerate}

Note that all these implementations retain the safety guarantee as it is offered by our safe skill-learning approach.

\subsection{Results and Baseline Comparison}
We compare our algorithms against the state-of-the-art baselines in METADRIVE environment namely, CoPo \cite{peng2021learning}, the independent policy optimization (IPPO) method \cite{de2020independent} using PPO \cite{schulman2017proximal} as the individual  learners, and Mean Field MARL with centralized critic \cite{yang2018mean} and form the mean field policy optimization
(MFPO). The curriculum learning (CL) \cite{Sanmit} is also a baseline, where the training was divided
into 4 phases and gradually the number of initial agents in each episode from $25\%$ to $100\%$ of the target
number. We ran our algorithm and the benchmark algorithms for 5 seeds over 300k and 1M iterations respectively. 
It is important to mention that the benchmark algorithms use 2000+ hours of individual driving experience in addition to the training iterations to achieve the illustrated. \textit{On the other hand, we jointly optimize both the RL policies from same data, thus not introducing any additional sample complexity due to the hierarchal decomposition}.  The training results include the success rate and average episode length to illustrate the efficacy and validate our proposed approach. Notably, the efficiency metric defined by the authors of CoPo \cite{peng2021learning}—calculated as the inverse of the average episode length divided by the difference between the number of successful and unsuccessful agents—further supports the validity of our chosen metrics. The \textit{success rate} is the fraction of the successful agents to the total number of spawned agents. 

We trained with 15 vehicles for merging, 30 vehicles for intersection and roundabout, 25 vehicles for bottleneck and 40 vehicles for tollgate respectively. As mentioned previously, the extrinsic reward corresponding to \eqref{stage_cost} was set to the environment reward, besides the regularization terms. The intrinsic reward is described in subsection \ref{Alg_imp}. We present the results of training for the Merging, Bottleneck and Tollgate environments in Figures \ref{fig:success_rate} and \ref{fig:average_time} respectively. The key observation is that our algorithm achieves almost 100 $\%$ success rate with an approximate model of the vehicle (see appendix \ref{Alg_imp}) from the beginning of training, while also optimizing the joint policy evident from the average episode length. Although several baselines achieve better average episode length in the tollgate environment, the success rate is very low, implying most agents crashes out resulting in the remaining agents to finish faster, returning a lower average episode length. With the hierarchical decomposition, our algorithms converge faster ($\le$ 300k) compared to the baselines which we run for 1M iterations, thus confirming the claim about expediting convergence. 
\begin{figure*}[t]
    \centering
        \includegraphics[width=1\textwidth]{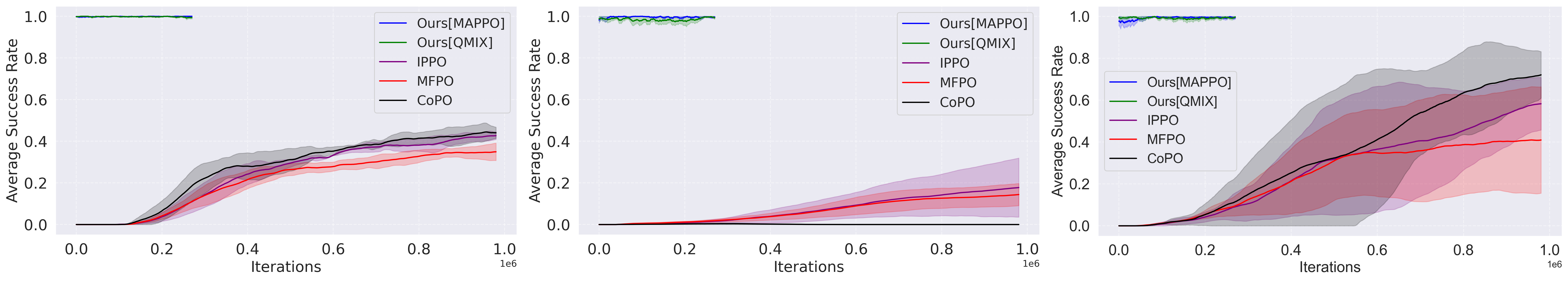} 
    \caption{From left to right: success rate in merging, tollgate, and bottleneck environments.} 
    \label{fig:success_rate}
\end{figure*}
\begin{figure*}[t]
    \centering
        \includegraphics[width=1\textwidth]{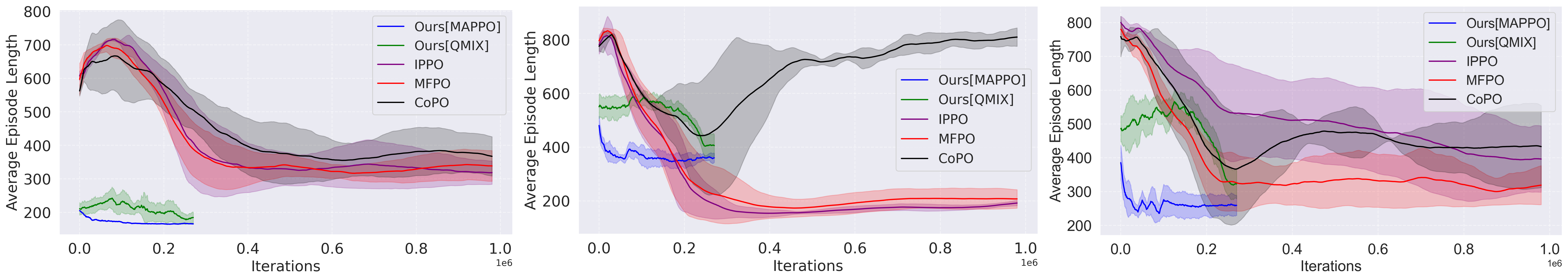} 
    \caption{From left to right: average episode length in merging, tollgate, and bottleneck environments.} 
    \label{fig:average_time}
\end{figure*}

We present the training results for the remaining three environments from figure \ref{fig:average_time} i.e. merging, roundabout and intersection in figure \ref{fig:merg_round_inter}. As can be noticed, our algorithm achieves a near perfect safety rate with both on-policy and off-policy algorithms during training, and, keeping consistent with the tollgate and bottleneck environments are trained for 300k iterations. However, in order to encourage exploration in roundabout and intersection environments for learning the cooperative behavior, we use curriculum learning by varying the arrival rates, while keeping the traffic density constant. The curriculum transitions happened at 30k, 50k, 80k and 125k iterations  during training. 

\begin{figure}[h!t]
    \centering
    \includegraphics[width=1\linewidth]{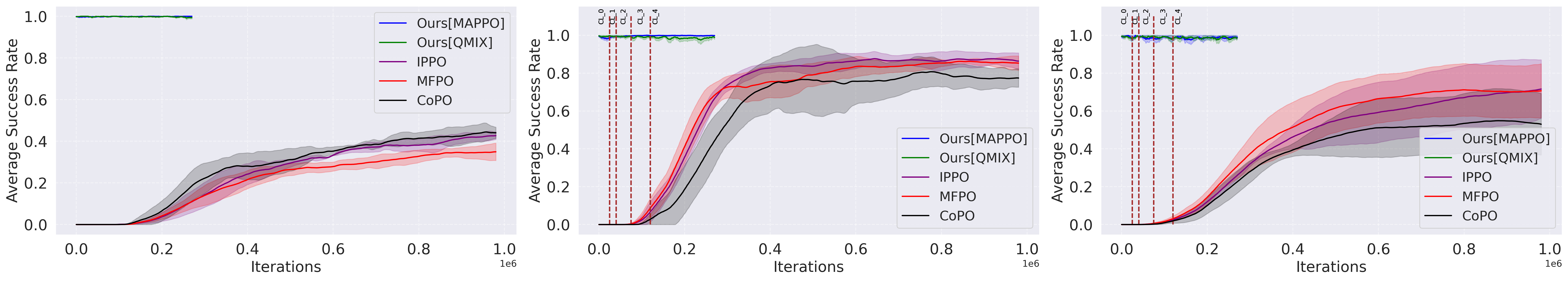}
    \caption{Plot of the success rate of the agents with our algorithm vs the baselines in merging, roundabout, and intersection environments respectively. CL refers to curriculum learning, four different curriculum were used with varying traffic densities. The benchmark methods use 2000+ hours of individual driving data prior to the training.}
    \label{fig:merg_round_inter}
\end{figure}

\subsubsection{Evaluation Setup and Results}

\textbf{METADRIVE Simulator.} Next, we evaluate the efficacy of our algorithm in optimizing the task objective against the baseline algorithms, We use the success-weighted metrics for evaluation and comparison with the baseline, ablation, and generalization experiments.  The reason for weighting the metrics by success rate is, because, poorly performing algorithms with very low success rate (agents crashing and getting removed from the episode rapidly) can achieve superior average energy and time which is debunked through this. Besides \textit{success rate}. we use \textit{success weighted average energy}  and \textit{success weighted average time} which are the average energy and episode length of all agents during an episode divided by the success rate. The evaluation results are presented in table \ref{tb:eval1}. 

\begin{table*}[h!t]
\centering
\caption{Summary of results for Merging, Intersection, Roundabout, Bottleneck, and Tollgate environments. $\uparrow$: higher is better; $\downarrow$: lower is better. \textbf{Bold:} the best performance for each metric.}
\label{tb:eval1}
\vspace{2mm}
\renewcommand{\arraystretch}{1.2}
\resizebox{\textwidth}{!}{%
\begin{tabular}{c c cccc!{\vrule width 1.2pt}cccc}
\Xhline{1.2pt}
Environments & Metrics & IPPO & CL & MFPO & CoPo &
\multicolumn{1}{c}{\begin{tabular}[c]{@{}c@{}}\textbf{HMARL-CBF}\\[-4pt]\textbf{(on-policy)}\end{tabular}} &
\multicolumn{1}{c}{\begin{tabular}[c]{@{}c@{}}\textbf{HMARL-CBF}\\[-4pt]\textbf{(on-policy}\\[-3pt]\textbf{w/ Grouping)}\end{tabular}} &
\multicolumn{1}{c}{\begin{tabular}[c]{@{}c@{}}\textbf{HMARL-CBF}\\[-4pt]\textbf{(off-policy)}\end{tabular}} &
\multicolumn{1}{c}{\begin{tabular}[c]{@{}c@{}}\textbf{HMARL-CBF}\\[-4pt]\textbf{(off-policy}\\[-3pt]\textbf{w/ Grouping)}\end{tabular}} \\
\Xhline{1.2pt}

\multirow{3}{*}{Merging}      & Success $\uparrow$  & $0.43_{\pm0.02}$      & $0.32_{\pm0.04}$      & $0.43_{\pm0.09}$      & $0.48_{\pm0.01}$     & \bm{$0.99_{\pm0.01}$}   & \bm{$0.99_{\pm0.00}$}  & \bm{$0.99_{\pm0.01}$}    & \bm{$0.99_{\pm0.01}$}    \\
                              & Time $\downarrow$   & $772.51_{\pm112.00}$  & $1231.72_{\pm44.53}$  & $762.32_{\pm47.97}$   & $669.46_{\pm72.08}$  & \bm{$164.52_{\pm2.64}$} & $187.91_{\pm64.65}$        & $169.60_{\pm7.61}$           & $166.70_{\pm9.26}$           \\
                              & Energy $\downarrow$ & $48.16_{\pm1.23}$     & $31.31_{\pm0.19}$     & $14.44_{\pm2.51}$     & $13.81_{\pm1.25}$    & $14.10_{\pm0.23}$           & \bm{$12.14_{\pm1.83}$} & $14.73_{\pm0.76}$            & $13.92_{\pm0.09}$            \\ \hline
\multirow{3}{*}{Intersection} & Success $\uparrow$  & $0.72_{\pm0.13}$      & $0.62_{\pm0.13}$      & $0.59_{\pm0.16}$      & $0.73_{\pm0.09}$     & $0.97_{\pm0.04}$            & $0.93_{\pm0.05}$           & \bm{$0.96_{\pm0.03}$}    & \bm{$0.96_{\pm0.01}$}    \\
                              & Time $\downarrow$   & $599.16_{\pm162.50}$  & $695.93_{\pm155.30}$  & $725.39_{\pm222.51}$  & $687.69_{\pm141.90}$ & $317.41_{\pm33.92}$         & $388.27_{\pm64.92}$        & \bm{$264.28_{\pm32.90}$} & $280.11_{\pm32.00}$          \\
                              & Energy $\downarrow$ & $5.90_{\pm0.62}$      & $6.64_{\pm0.56}$      & $6.52_{\pm0.27}$      & $5.68_{\pm0.39}$     & $5.56_{\pm0.04}$            & \bm{$3.62_{\pm0.68}$}  & $5.78_{\pm0.24}$             & $5.11_{\pm0.20}$             \\ \hline
\multirow{3}{*}{Roundabout}   & Success $\uparrow$  & $0.85_{\pm0.05}$      & $0.82_{\pm0.04}$      & $0.85_{\pm0.04}$      & $0.80_{\pm0.03}$     & \bm{$0.99_{\pm0.01}$}   & \bm{$0.99_{\pm0.04}$}  & $0.98_{\pm0.03}$             & $0.98_{\pm0.02}$             \\
                              & Time $\downarrow$   & $447.06_{\pm6.41}$    & $332.89_{\pm5.70}$    & $404.00_{\pm17.88}$   & $515.05_{\pm18.83}$  & $403.18_{\pm21.66}$         & $375.67_{\pm29.31}$        & $370.21_{\pm67.04}$          & \bm{$326.26_{\pm24.13}$} \\
                              & Energy $\downarrow$ & $11.74_{\pm0.52}$     & $12.69_{\pm0.18}$     & $12.07_{\pm0.67}$     & $12.31_{\pm0.26}$    & $9.89_{\pm0.23}$            & \bm{$7.75_{\pm0.39}$}  & $10.15_{\pm0.82}$            & $11.41_{\pm0.69}$            \\ \hline
\multirow{3}{*}{Bottleneck}   & Success $\uparrow$  & $0.09_{\pm0.03}$      & $0.34_{\pm0.15}$      & $0.48_{\pm0.16}$      & $0.75_{\pm0.07}$     & \bm{$0.99_{\pm0.00}$}   & \bm{$0.99_{\pm0.02}$}  & \bm{$0.99_{\pm0.01}$}    & \bm{$0.99_{\pm0.01}$}    \\
                              & Time $\downarrow$   & $2644.44_{\pm353.44}$ & $939.35_{\pm147.21}$  & $679.83_{\pm158.33}$  & $651.54_{\pm63.52}$  & $241.16_{\pm8.33}$          & $262.94_{\pm59.10}$        & $313.80_{\pm22.00}$          & \bm{$240.47_{\pm28.52}$} \\
                              & Energy $\downarrow$ & $69.66_{\pm9.77}$     & $11.61_{\pm2.02}$     & $9.95_{\pm1.39}$      & $6.93_{\pm0.61}$     & $7.37_{\pm0.00}$            & \bm{$4.86_{\pm0.68}$}  & $7.29_{\pm0.08}$             & $7.49_{\pm0.03}$             \\ \hline
\multirow{3}{*}{Tollgate}     & Success $\uparrow$  & $0.08_{\pm0.37}$      & $0.05_{\pm0.02}$      & $0.11_{\pm0.04}$      & $0.00_{\pm0.00}$     & \bm{$0.99_{\pm0.02}$}   & $0.98_{\pm0.02}$           & $0.98_{\pm0.01}$             & \bm{$0.99_{\pm0.01}$}    \\
                              & Time $\downarrow$   & $2809.04_{\pm364.04}$ & $3211.06_{\pm511.27}$ & $1884.36_{\pm319.82}$ & -                    & $369.32_{\pm5.82}$          & $427.02_{\pm32.43}$        & $377.42_{\pm9.99}$           & \bm{$307.82_{\pm8.09}$}  \\
                              & Energy $\downarrow$ & $74.64_{\pm10.47}$    & $114.25_{\pm27.02}$   & $53.63_{\pm7.72}$     & -                    & $8.40_{\pm0.05}$            & \bm{$5.17_{\pm0.44}$}  & $8.93_{\pm0.11}$             & $8.99_{\pm0.01}$             \\ 
\Xhline{1.2pt}
\end{tabular}}
\end{table*}

As can be seen, our method achieves almost 100$\%$ success rate which is significantly better than the baselines. Also, we achieve better time and energy efficiency compared to the baselines across all environments as evident from the results. This goes to highlight that our methods learns a better cooperative policy in addition to enhancing the safety of the agents. 

In addition to the presented results in table \ref{tb:eval1}, we run the baseline algorithms by changing the penalty factors for the safety related terms (both inter-agent and environment safety) in the environment reward for the bottleneck environment. The results are presented in table \ref{tb:eval_pf_bottleneck}. The primary takeaway is that the added emphasis on safety does not render better results for the baseline algorithms compared to our method. Besides that, different algorithms perform differently with increase in penalty factor, thus not exhibiting any trend between penalty and success rate. 

\begin{table*}[h!t]
\centering
\caption{Summary of baseline comparison using various penalty factors for Bottleneck environment. $\uparrow$: higher is better; $\downarrow$: lower is better. \textbf{Bold:} the best performance for each metric.}
\label{tb:eval_pf_bottleneck}
\vspace{2mm}
\renewcommand{\arraystretch}{1.2}
\resizebox{\textwidth}{!}{%
\begin{tabular}{c c c c c c !{\vrule width 1.2pt} c}
\Xhline{1.2pt}
Penalty Factors & Metrics & IPPO & CL & MFPO & CoPo &
\multicolumn{1}{c}{\begin{tabular}[c]{@{}c@{}}\textbf{HMARL-CBF}\\[-4pt]\textbf{(on-policy)}\end{tabular}} \\
\Xhline{1.2pt}
\multirow{3}{*}{1} 
  & Success $\uparrow$  & $0.09_{\pm0.03}$      & $0.34_{\pm0.15}$      & $0.48_{\pm0.16}$      & $0.75_{\pm0.07}$     & \bm{$0.99_{\pm0.00}$}    \\
  & Time    $\downarrow$ & $2644.44_{\pm353.44}$ & $ 939.35_{\pm147.21}$ & $ 679.83_{\pm158.33}$ & $651.54_{\pm63.52}$  & \bm{$241.16_{\pm8.33}$   }  \\
  & Energy  $\downarrow$ & $  69.66_{\pm9.77}$   & $  11.61_{\pm2.02}$   & $   9.95_{\pm1.39}$   & $  6.93_{\pm0.61}$   & \bm{$  7.37_{\pm0.00}$   }  \\
\hline
\multirow{3}{*}{5} 
  & Success $\uparrow$  &  $0.58_{\pm0.11}$   & $0.395_{\pm0.01}$     &  $0.615_{\pm0.08}$  & $0.716_{\pm0.02}$   & \bm{$0.99_{\pm0.00}$}    \\
  & Time    $\downarrow$ & $568_{\pm56}$ & $646.5_{\pm43.16}$ & $713_{\pm273.7}$& $529.13_{\pm79}$  &\bm{ $241.16_{\pm8.33}$   }  \\
  & Energy  $\downarrow$ & $30_{\pm41.5}$ & $10.8_{\pm0.24}$    & $8.64_{\pm0.65}$  & $7.69_{\pm0.02}$ & \bm{$  7.37_{\pm0.00}$  }   \\
\hline
\multirow{3}{*}{10} 
  & Success $\uparrow$  & $0.18_{\pm0.08}$    & $0.55_{\pm0.11}$   & $0.42_{\pm0.23}$    & $0.655_{\pm0.04}$    & \bm{$0.99_{\pm0.00}$}    \\
  & Time    $\downarrow$ & $1292.17_{\pm647}$  & $606.49_{\pm204.89}$  & $1414_{\pm973}$& $514_{\pm126.92}$ & \bm{$241.16_{\pm8.33}$  }   \\
  & Energy  $\downarrow$ & $40.97_{\pm27}$  & $9.14_{\pm0.93}$   & $22.12_{\pm22.08}$  & $8.37_{\pm0.04}$ & \bm{$  7.37_{\pm0.00}$ }    \\
\Xhline{1.2pt}
\end{tabular}%
}
\end{table*}
\textbf{Lidar Environment Suite.} We compare our method with DGPPO \cite{zhang2025discrete}, InforMARL \cite{nayak2023scalablemultiagentreinforcementlearning} and MAPPO-Lagrangian \cite{gu2022multiagentconstrainedpolicyoptimisation}, three state of the art methods for multi-agent safe control. For InforMARL, we evaluate fixed and scheduled penalty variants. Further details about the baselines can be found in \cite{zhang2025discrete}. For MAPPO-Lagrangian, we test two settings for the Lagrange multiplier and one learning rate. Performance is better if the plot is located in top right corner. As can be seen, our method achieves a high success rate while also achieving high reward compared to the baselines in \cite{zhang2025discrete}. Finally, we present the results for the bicycle target environment in figure \ref{fig:bicycle_target} which shows that our method scales with number of agents. 

\begin{figure*}[h]
    \centering
    \includegraphics[width=0.8\textwidth]{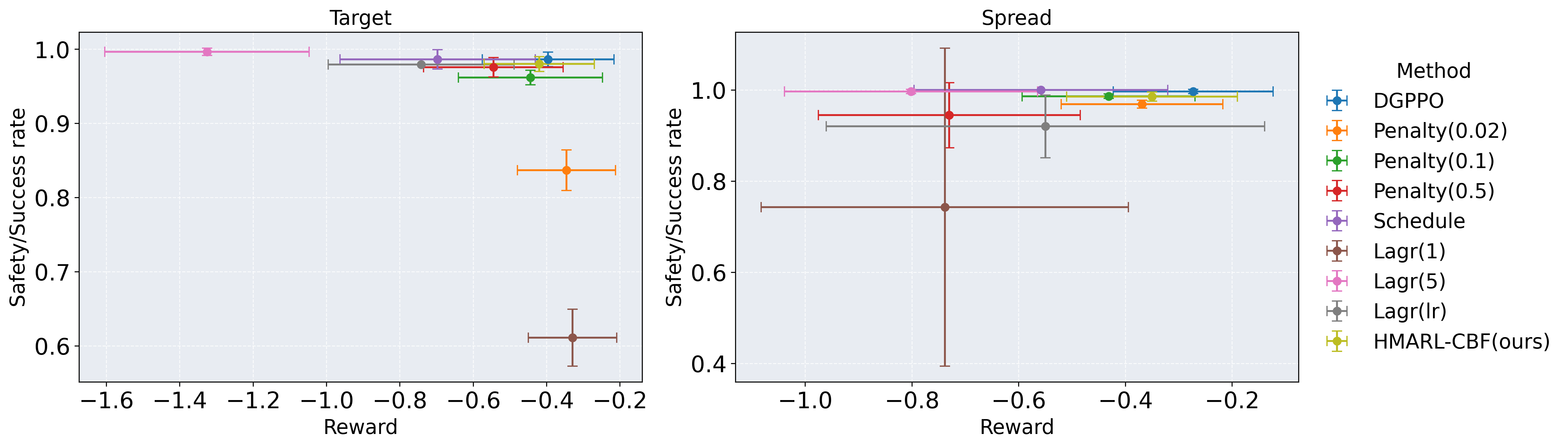} 
    \caption{Success/safety rate vs reward for Target and Spread environment.} 
    \label{fig:target_spread}
\end{figure*}
\begin{figure*}[h]
    \centering
    \includegraphics[width=0.4\textwidth]{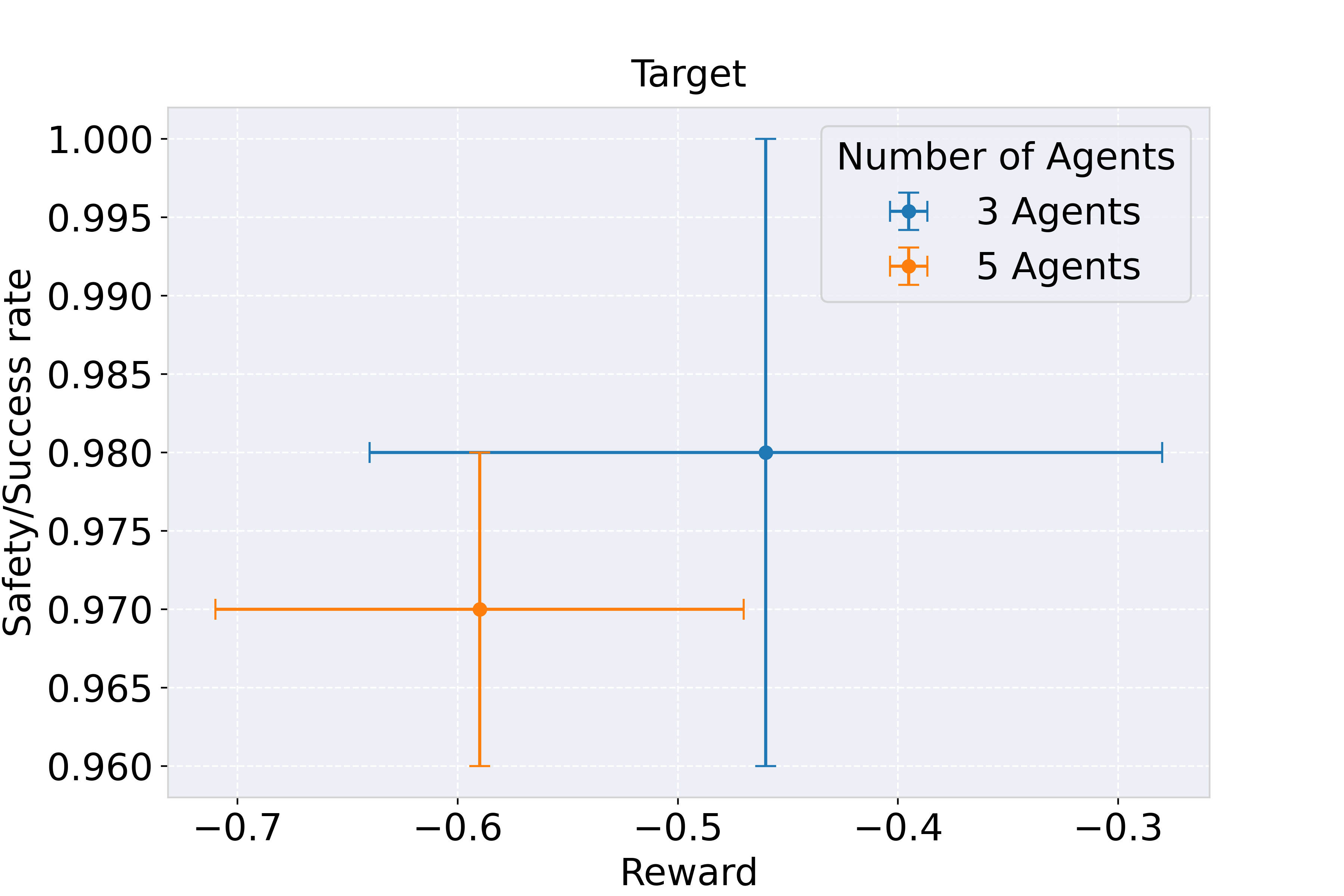} 
    \caption{Success/safety rate vs reward for Bicycle Target environment.} 
    \label{fig:bicycle_target}
\end{figure*}

\subsection{Ablation Results}
In this subsection, we present ablation studies to further highlight the merit of our proposed approach using the HMARL-CBF on-policy implementation. The results are presented in Table \ref{tb:merge_inter_round}. First, we remove the hierarchical structure to implement a flat MAPPO policy using various penalties on the safety constraints. However, a flat policy fails to learn in these environments. Next, we replace the cooperative high‑level policy with a uniform random selection of skills across agents. This causes a reduction in success rates as well as performance across all environments, Subsequently, we investigate the necessity of pointwise‑in‑time constraints via random constraints dropout—masking road boundary constraints in the low‑level policy with a Bernoulli distributed random variable (success Probability 0.5)—which uniformly reduces success rates and exhibits the largest performance decline in environments where agents cross paths across several zones in the environment (i.e. safety-critical). Next, we substitute the low‑level skill policy with an parameterized LQR‑based policy (removing the CBFs) within the hierarchical MARL framework. This causes a significant decline in success rate as well as in success‑weighted time and energy efficiency. Finally, we examine fixed low‑level policy parameterization by selecting parameters at the boundaries of the feasible space without prior knowledge, which further diminishes performance across all metrics across the environments. 
\begin{table*}[h]
\centering
\caption{Summary of ablation results for all the environments (Pf is penalty factor).}
\vspace{2mm}
\label{tb:merge_inter_round}
\renewcommand{\arraystretch}{1.2}
\resizebox{\textwidth}{!}{%
\begin{tabular}{cccccccccc}
\Xhline{1.2pt}
Environments                  & Metrics             & Flat Policy & Flat Policy & Flat Policy & Uniform     & Random Constraints  & Hierarchical MARL & Fixed low level & \textbf{HMARL-CBF} \\ 
                              &                     & (Pf = 1)    & (Pf = 5)    & (Pf = 10)   &      high-level                    &          Dropout                  &    with LQR                         &                           parameterization        &                     \\ 
\midrule
\multirow{3}{*}{Merging}      & Success $\uparrow$  & 0           & 0           & 0           & $0.94_{\pm0.09}$       & $0.91_{\pm0.04}$           & 0                          & $0.52_{\pm0.06}$                 & \bm{$0.99_{\pm0.01}$} \\ 
                              & Energy $\downarrow$ & -           & -           & -           & $14.17_{\pm0.49}$      & $15.30_{\pm0.31}$          & -                          & $22.30_{\pm0.12}$                & \bm{$14.10_{\pm0.23}$} \\ 
                              & Time $\downarrow$   & -           & -           & -           & $211.70_{\pm23.80}$    & $183.91_{\pm1.49}$         & -                          & -                                & \bm{$164.52_{\pm2.64}$} \\ 
\midrule
\multirow{3}{*}{Intersection} & Success $\uparrow$  & 0           & 0           & 0           & $0.86_{\pm0.04}$       & $0.80_{\pm0.04}$           & $0.14_{\pm0.02}$           & 0                                & \bm{$0.97_{\pm0.04}$} \\ 
                              & Energy $\downarrow$ & -           & -           & -           & $5.75_{\pm0.08}$       & $5.00_{\pm0.11}$           & $50.00_{\pm0.71}$          & -                                & \bm{$5.56_{\pm0.04}$} \\ 
                              & Time $\downarrow$   & -           & -           & -           & $574.30_{\pm60.60}$    & $320.50_{\pm68.17}$        & $921.42_{\pm40.71}$        & -                                & \bm{$317.41_{\pm33.92}$} \\ 
\midrule
\multirow{3}{*}{Roundabout}   & Success $\uparrow$  & 0           & 0           & 0           & $0.68_{\pm0.08}$       & $0.60_{\pm0.08}$           & 0                          & 0                                & \bm{$0.99_{\pm0.01}$} \\ 
                              & Energy $\downarrow$ & -           & -           & -           & $13.23_{\pm1.08}$      & $8.50_{\pm0.74}$           & -                          & -                                & \bm{$9.89_{\pm0.23}$}  \\ 
                              & Time $\downarrow$   & -           & -           & -           & $907.35_{\pm50.98}$    & $600.03_{\pm34.67}$        & -                          & -                                & \bm{$403.18_{\pm21.66}$} \\ 
\midrule
\multirow{3}{*}{Bottleneck}   & Success $\uparrow$  & 0           & 0           & 0           & $0.65_{\pm0.03}$       & $0.55_{\pm0.09}$           & 0                          & 0                                & \bm{$0.99_{\pm0.00}$} \\ 
                              & Energy $\downarrow$ & -           & -           & -           & $10.03_{\pm0.10}$      & $12.03_{\pm0.04}$          & -                          & -                                & \bm{$7.37_{\pm0.00}$}  \\ 
                              & Time $\downarrow$   & -           & -           & -           & $923.07_{\pm61.73}$    & $965.45_{\pm24.18}$        & -                          & -                                & \bm{$241.16_{\pm8.33}$} \\ 
\midrule
\multirow{3}{*}{Tollgate}     & Success $\uparrow$  & 0           & 0           & 0           & $0.91_{\pm0.03}$       & $0.92_{\pm0.03}$           & 0                          & 0                                & \bm{$0.99_{\pm0.02}$} \\ 
                              & Energy $\downarrow$ & -           & -           & -           & $8.68_{\pm0.03}$       & $9.35_{\pm0.02}$           & -                          & -                                & \bm{$8.40_{\pm0.05}$}  \\ 
                              & Time $\downarrow$   & -           & -           & -           & $713.00_{\pm24.15}$    & $423.40_{\pm15.60}$        & -                          & -                                & \bm{$369.32_{\pm5.82}$} \\ 
\Xhline{1.2pt}
\end{tabular}
}
\end{table*}

We also perform generalization experiments to assess how well our policies to various environmental changes. The results for the experiments can be found in appendix \ref{generalization_results}.

\subsubsection{Generalization Results}
\label{generalization_results}
 Besides the evaluation experiments, we conducted a series of experiments to assess the generalizability of our proposed approach. Firstly we systematically vary the lane widths in the environments. The corresponding success rates are reported in Table \ref{tb:lane_width_success}. Our analysis indicates that reducing lane width has a minimal impact on success rates in the roundabout and bottleneck environments. However, a more comparatively greater decline is observed in the intersection environment. This performance degradation is primarily attributed to the default intersection configuration (Figure \ref{fig:env_metadrive}), where two lanes are available in each direction. Agents are trained to execute lane changes from both lanes depending on their assigned routes. When lane width is reduced, agents attempting to perform lane changing from the inside lane at times gets stuck (and, eventually get eliminated from the episode due to timeout), or, hit road boundary which causes the drop in success rate. Conversely, increasing lane width generally maintains or enhances success rates across all environments.

Secondly, we investigated skill transferability by training low-level skills in an intersection environment and subsequently using them to train a high-level policy across multiple environments. Namely, we investigate the transferability in the merging and roundabout environments. This is because the observation is different and distinct for the Bottleneck and Tollgate environments compared to the other environments. The results are presented in table \ref{tb:gen_skill_transfer}. This analysis reveals that skills learned in one environment can generalize effectively to others producing comparatively similar performance. In the roundabout environment, we observe a minor decline in the success rate and increase in the weighted average energy and time; yet our approach outperforms the baselines presented in table \ref{tb:merge_inter_round}.

Finally, we test the generalizability of our method for varying traffic density/ number of agents. The results for this test are presented in tables \ref{tb:agent_num1} and \ref{tb:agent_num2} respectively. For every environment we trained for the default density as shown in the table and tested for lower and higher values. From the results, we can conclude that the success rate is unaffected by  traffic density. The average time of the agents increases with increasing traffic density as expected. Finally, the energy values show a decreasing trend with increase in traffic density across most environments. This was anticipated because energy has a negative correlation with time.

\subsubsection{Computer Details.} We conducted our experiments on a desktop equipped with a single NVIDIA RTX A5000 GPU and 32 CPU cores. Given that our environments involve a high number of agents (up to 45 agents in the tollgate environment), the primary computational cost is attributed to data collection. Training and convergence are achieved efficiently, as demonstrated in the training and evaluation results. Furthermore, we utilize shared neural network architectures across agents, which significantly reduces the overall computational cost and training time.

\begin{table*}[t]
\centering
\caption{Success rate across different environments for varying lane widths.}
\vspace{2mm}
\label{tb:lane_width_success}
\small
\renewcommand{\arraystretch}{1.2}
\begin{adjustbox}{width=\textwidth}
\begin{tabular}{c|ccccc}
\Xhline{1.2pt}
\multirow{2}{*}{\textbf{Lane Width (m)}} & \multicolumn{5}{c}{\textbf{Success Rate}} \\
\cline{2-6}
 & \textbf{Merging} & \textbf{Intersection} & \textbf{Roundabout} & \textbf{Bottleneck} & \textbf{Tollgate} \\
\Xhline{1.2pt}
3.25           & $0.99_{\pm0.01}$  & $0.90_{\pm0.03}$   & $0.98_{\pm0.02}$   & $0.97_{\pm0.02}$   & $0.99_{\pm0.001}$ \\
3.5 (default)  & $0.99_{\pm0.01}$  & $0.97_{\pm0.04}$   & $0.99_{\pm0.01}$   & $0.99_{\pm0.003}$  & $0.99_{\pm0.02}$  \\
4              & $0.99_{\pm0.01}$  & $0.97_{\pm0.015}$  & $0.99_{\pm0.01}$   & $0.99_{\pm0.001}$  & $0.99_{\pm0.02}$  \\
\Xhline{1.2pt}
\end{tabular}
\end{adjustbox}
\end{table*}

\begin{table*}[h!t]
\centering
\caption{Summary of skill transfer results (* indicates runs done by transferring skills).}
\vspace{2mm}
\label{tb:gen_skill_transfer}
\small
\renewcommand{\arraystretch}{1.2}
\begin{tabular}{l|ccc}
\Xhline{1.2pt}
\textbf{Experiments}   & \textbf{Success}     & \textbf{Energy}       & \textbf{Time}          \\
\Xhline{1.2pt}
Merging                & $0.99_{\pm0.01}$     & $14.24_{\pm0.23}$     & $166.18_{\pm2.64}$     \\
Merging$^*$            & $0.96_{\pm0.016}$    & $14.49_{\pm0.58}$     & $178.34_{\pm5.90}$     \\
Roundabout             & $0.99_{\pm0.01}$     & $9.89_{\pm0.23}$      & $403.18_{\pm21.66}$    \\
Roundabout$^*$         & $0.96_{\pm0.016}$    & $11.39_{\pm0.08}$     & $404.16_{\pm18.6}$     \\
\Xhline{1.2pt}
\end{tabular}
\end{table*}

\begin{table*}[h!t]
\centering
\caption{Summary of results for varying number of agents in Merging, Intersection, and Roundabout environments.}
\label{tb:agent_num1}
\vspace{2mm}
\small
\renewcommand{\arraystretch}{1.2}
\resizebox{\textwidth}{!}{%
\begin{tabular}{l|ccc|ccc|ccc}
\Xhline{1.2pt}
\multirow{2}{*}{\textbf{Metric}} & \multicolumn{3}{c|}{\textbf{Merging}} & \multicolumn{3}{c|}{\textbf{Intersection}} & \multicolumn{3}{c}{\textbf{Roundabout}} \\
\cline{2-10}
& 10 & 15 (default) & 20 & 20 & 30 (default) & 40 & 20 & 30 (default) & 40 \\
\Xhline{1.2pt}
Success & $0.997_{\pm0.0}$ & $0.99_{\pm0.01}$ & $0.98_{\pm0.02}$ & $0.955_{\pm0.02}$ & $0.97_{\pm0.04}$ & $0.94_{\pm0.04}$ & $0.99_{\pm0.01}$ & $0.99_{\pm0.0}$ & $0.98_{\pm0.0}$ \\
Energy  & $14.03_{\pm0.36}$ & $14.24_{\pm0.23}$ & $14.47_{\pm0.18}$ & $5.85_{\pm0.07}$ & $5.82_{\pm0.04}$ & $5.94_{\pm0.05}$ & $10.48$ & $10.32$ & $9.95$ \\
Time    & $165.18_{\pm2.15}$ & $166.18_{\pm2.64}$ & $169.55_{\pm2.15}$ & $318.54_{\pm19}$ & $332.3_{\pm36}$ & $361.8_{\pm20.46}$ & $389.34$ & $404.38$ & $415.03$ \\
\Xhline{1.2pt}
\end{tabular}%
}
\end{table*}

\begin{table*}[h!t]
\centering
\caption{Summary of results for varying number of agents in the Bottleneck and Tollgate environments.}
\label{tb:agent_num2}
\vspace{2mm}
\small
\renewcommand{\arraystretch}{1.2}
\begin{tabular}{l|ccc|ccc}
\Xhline{1.2pt}
\multirow{2}{*}{\textbf{Metric}} & \multicolumn{3}{c|}{\textbf{Bottleneck}} & \multicolumn{3}{c}{\textbf{Tollgate}} \\
\cline{2-7} 
& 20 & 25 (default) & 30 & 30 & 40 (default) & 45 \\
\Xhline{1.2pt}
Success & $0.99_{\pm0.01}$ & $0.99_{\pm0.003}$ & $0.978_{\pm0.02}$ & $0.99_{\pm0.02}$ & $0.99_{\pm0.02}$ & $0.99_{\pm0.01}$ \\
Energy  & $7.47_{\pm0.03}$ & $7.44_{\pm0.001}$ & $7.43_{\pm0.02}$ & $8.51_{\pm0.02}$ & $8.51_{\pm0.048}$ & $8.50_{\pm0.02}$ \\
Time    & $227.22_{\pm7.77}$ & $243.59_{\pm8.33}$ & $244.62_{\pm6.62}$ & $371.53_{\pm6.78}$ & $371.05_{\pm5.82}$ & $374.02_{\pm8.48}$ \\
\Xhline{1.2pt}
\end{tabular}
\end{table*}





\subsection{Algorithm Implementation}
\label{Alg_imp}

\subsubsection{Implementation Details: METADRIVE Simulator}
Metadrive environments illustrated in figure \ref{fig:metadrive-envs} by default, include 3 basic sensors: lidar, sideDetector and laneLineDetector which are used for detecting moving objects, sidewalks/solid lines, and broken/solid lines respectively. The lidar state observation returns a state vector containing necessary information for navigation tasks. The details of the observation can be found here \footnote{https://metadrive-simulator.readthedocs.io/en/latest/obs.html}. The environmental safety constraints include i. the road boundaries and ii. the booth for tollgate environment only. And, each agent has to stay safe to all neighboring agents which are within the lidar sensing radius. It is important to mention that we only use a dynamic bicycle model for the CBF constraints, the environment uses pybullet physics engine for simulation. The kinematic bicycle model of the vehicle is provided in \eqref{VehicleDynamics}.
\begin{equation}
\label{VehicleDynamics}
\begin{aligned}
\dot{p} &= \frac{v \cos(\psi)}{1 - d \kappa(p)}, \\
\dot{d} &= v \sin(\psi), \\
\dot{\psi} &= \frac{v}{L} \tan(\delta) - \kappa(p) \dot{p}, \\
\dot{v} &= a.
\end{aligned}
\end{equation}
where \( p \) represents the position of the vehicle along the reference path, \( d \) is the lateral deviation of the vehicle from the reference path, \( \psi \) denotes the yaw angle of the vehicle (heading angle relative to the path), and \( v \) is the longitudinal velocity of the vehicle. The control inputs are \( \delta \), which is the steering angle, and \( a \), which is the acceleration (or deceleration). The parameter \( L \) denotes the wheelbase of the vehicle, and \( \kappa(p) \) is the curvature of the reference path at position \( s \). It is important to add that, these states are provided by the environment for each ego vehicle/agent in its observation. 

In order to incorporate safety with respect to other agents and environmental obstacles, we define simple circles-based constraints as illustrated in figure \ref{fig:obs_safety}. And, for the road boundaries we use the lateral deviation from the center lane to define safety functions.  Let $p^j$ and $d^j$ denote the distance along the path and lateral deviation of another agent $j$ with respect to agent $i$, located within  within the sensing radius of agent $i$. This information is provided in the observation of agent $i$ i.e. $(d^i, p^j,d^j) \in \mathcal{O}^i$. Note that, for an ego vehicle $p^i$ is always 0 in its body frame. let $\Delta(p^i,d^i,p^j,d^j)$ denote the euclidean distance between agent $i$ and $j$ respectively. Similarly, let $p^e$ and $d^e$ denote the longitudinal and lateral distance of an environmental obstacle, like a tollgate booth as illustrated in figure \ref{fig:obs_safety} from agent $i$, and $\Delta(p^i,d^i,p^e,d^e)$ be the euclidean distance. Then, the safety constraints corresponding to other agent $j$, environmental constraints for an agent $i$ are given below:
\begin{equation}
\label{road_boundary}
    \Delta(p^i,d^i,p^j,d^j)^2 \ge r(v^i) \text{ , } 
    \Delta(p^i,d^i,p^e,d^e)^2 \ge e; \text{ , and } |d^i|\le c  
\end{equation}
where $r(v^i)$ is a monotonically increasing function of $v^i$, $e \in \mathbb{R}_{>0}$ and $c\in\mathbb{R}_{>0}$.
\begin{figure}[t]
    \centering \includegraphics[width=0.7\linewidth]{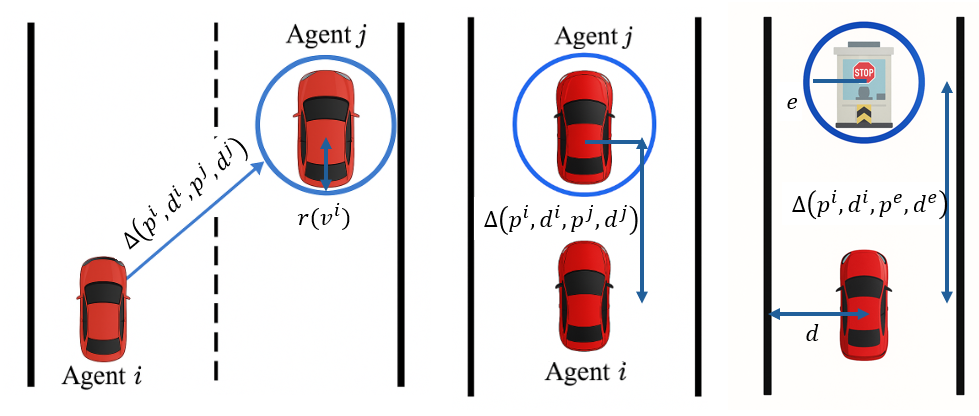}
    \caption{From left to right: illustration of i. inter agent safety constraints on another lane, ii. inter agent safety constraints on same lane and ii. environmental constraints.}
        \label{fig:obs_safety}
\end{figure}

Consider the CBF for the constraint for inter-agent safety defined as $h(\mathcal{O}^i) =  \Delta(p^i, d^i,p^j, d^j) - r(v^i)$ where $\Delta(p^i, d^i, p^j, d^j) = \sqrt{(p^i - p^j)^2 + (d^i - d^j)^2}
$, then the corresponding CBF constraint is given by:
\begin{align}
\label{CBF_Constraint}
&\underbrace{
2(p^i - p^j) \left( \frac{v^i \cos(\psi^i)}{1 - d^i \kappa(p^i)} - \dot{p}^j \right)
+ 2(d^i - d^j) \left( v^i \sin(\psi^i) - \dot{d}^j \right)
}_{L_f h(\mathcal{O}^i)}
+ \underbrace{
- 2r(v^i) \frac{dr(v^i)}{dv^i} a^i
}_{L_g h(\mathcal{O}^i) a^i}
\nonumber \\
&\quad + \underbrace{
\alpha \left( (p^i - p^j)^2 + (d^i - d^j)^2 - r(v^i)^2 \right)
}_{h(\mathcal{O}^i)}
\ge 0
\end{align}

Additionally, we define CLFs for state reference tracking namely velocity and heading reference tracking. As we define termination state set corresponding to each skill, we define CLFs to drive the system to the termination state to be able to successfully execute the skill. However, it is important to note that the CLFs are not defined specific to skills and, hence, are generalized to the task. The CLFs corresponding to velocity and heading reference are given by the equation below.
where $v_t^i$ and $v_{t_f}^i$ are the velocities at 
We parameterize the corresponding CLF constraint which in case of the velocity and heading constraints becomes the following:
\begin{align}
    \label{vclf}V(v^i, v^i_{des}) = (v^i - v^i_{des})^2 \text{; }\\
    \label{vpsi}V(\psi^i, \psi^i_{des}) = (\psi^i - \psi^i_{des})^2
\end{align}
where $v_{des}$ is the desired velocity and $\psi_{des}$ is the desired heading of the vehicle.

\textbf{Skills description.} We define five skills for each agent for the METADRIVE environments which are: i. cruising, ii. accelerating (speed up), iii. yielding (slowing down), iv. left lane changing, and v. right lane changing.

\textit{Cruising.} As the name suggests, this skill causes the vehicle to move at a constant speed for a fixed distance. The velocity CLF constraint in \eqref{vclf} is incorporated in the skill QP based policy by setting $v_{des} = 0$.

\textit{Accelerating/speeding up} This skill corresponds to an agent increasing its longitudinal velocity. Let the initial velocity at the time of the initiation of the skill be \( v_0 \). The skill terminates when the velocity reaches \( v_T = v_0 + dv \), where \( dv \in \mathbb{R}_{>0} \) denotes the desired velocity increment and $v_T$ is the velocity of the agent at the time of termination. Hence, the $v_{des}$ is set to $v_0+dv$ for this skill in the low level QP based skill policy. This skill facilitates timely task completion and helps avoid congestion by preventing agents from unnecessarily slowing down the traffic flow.

\textit{Yielding/slowing down.} \textit{Yielding.} This skill enables an agent to reduce its longitudinal velocity. Let \( v_0 \) be the velocity at skill initiation. The skill terminates when the velocity decreases to \( v_T = v_0 - dv \), where \( dv \in \mathbb{R}_{>0} \) is the desired reduction and $v_T$ is the velocity at the time of termination of the skill. Thus, the velocity CLF in \eqref{vclf} is incorporated to the QP based skill policy by setting $v_{des} = v_0-dv$ Yielding allows agents to safely navigate dense traffic and cooperate by yielding to agents approaching from other directions, thereby facilitating efficient joint task completion.

\textit{Left Lane Changing.} This skill enables an agent to switch to the adjacent left lane. Let \( d_{left,0} \) denote the distance to the left road boundary and \( d_0 \) be the lateral deviation from the center lane at the initiation of the skill. Let \( d_{lane} \) denote the lane width, provided by the environment. The target lateral position is computed as \( d_{left,T} = d_{left,0} - d_{lane} + d_0 \). We define the desired heading angle as a function of $e$ i.e. \( \phi_{\text{des}} = f(e) \), where \( e = d_{left,T} - d_{left,0} \), and incorporate it into the CLF constraint on heading in \eqref{vpsi} via a QP-based low-level skill control policy. This skill is essential for scenarios such as tollgates, bottlenecks, and merging zones, where adaptive lateral movement is required to maintain flow and avoid congestion.

\textit{Right Lane Changing.} This skill enables an agent to switch to the adjacent right lane. Let \( d_{right,0} \) denote the distance to the right road boundary and \( d_0 \) be the lateral deviation from the center lane at the initiation of the skill. Let \( d_{lane} \) denote the lane width, provided by the environment. The target lateral position is computed as \( d_{right,T} = d_{right,0} - d_{lane} + d_0 \). We define the desired heading angle as a function of \( e \), i.e., \( \phi_{\text{des}} = f(e) \), where \( e = d_{right,T} - d_{right,0} \), and incorporate it into the CLF constraint on heading in \eqref{vpsi} via a QP-based low-level skill control policy. This skill is useful in scenarios such as avoiding obstacles, preparing for highway exits, or facilitating merging from the right, enabling smoother and safer lane coordination.

As mentioned previously we include a timeout to ensure finite time termination of the skills for algorithmic sanity. 

\textbf{Intrinsic reward}
The intrinsic reward serves as an internal shaping signal that guides each agent toward desirable driving behaviors—smoothness, lane‐centering, and reference‐tracking—independently of any external task reward. By combining several negative penalties, the agent is discouraged from abrupt maneuvers, drifting off its lane, or deviating from its target speed and heading.

\begin{equation}
r^i_{\mathrm{L}} = 
-\,c_1||\boldsymbol{a}^i||^2
 -c_2\!\left(\frac{v^i - v_{\mathrm{des}}}{v_{\mathrm{des}}}\right)^{\!2}
 -c_3\!\left(\frac{\psi^i - \psi_{\mathrm{des}}}{\pi}\right)^{\!2}
 -c_4 ||d^i||^2
\end{equation}

Where $c_1,c_2,c_3,c_4,c_5$ are the coefficients of each terms. The intrinsic reward combines penalties on aggressive acceleration and steering, deviations from reference speed and heading, and lateral offset from the lane center, with a small forward‐progress bonus. Together, these terms encourage smooth, centered, and efficient driving without abrupt maneuvers.

\textbf{CBF-Based Safe Skill Implementation.} As described earlier, we parameterize the QP-based deterministic policy \( \mu(\boldsymbol{o} \mid z, \boldsymbol{\phi}) \) for each skill \( z \), as detailed in Section~\ref{skill-policy}. The QP formulation is parameterized through the objective, as well as the CBF and CLF constraints. In particular, we introduce parameters for the class-\( \mathcal{K} \) function in the CBF constraint, the shape and size of the circular safety regions, and the convergence rate of the CLF constraint. To ensure validity, all parameters are constrained to be nonnegative using box constraints of the form \( (0, \phi_H] \), where \( \phi_H \) is a fixed upper bound applied uniformly across all parameters.

\subsubsection{Implementation Details: Lidar Suite Simulator}

\textbf{Target.} Agents dynamics in this environment are governed by double integrator dynamics with continuous-time states $\boldsymbol{x}_i = [p_x^i, p_y^i, v_x^i, v_y^i]^\top \in \mathbb{R}^4$, where $\boldsymbol{p}_i = [p_x^i, p_y^i]^\top \in \mathbb{R}^2$ denotes position and $\boldsymbol{v}_i = [v_x^i, v_y^i]^\top \in \mathbb{R}^2$ denotes velocity; control inputs $\boldsymbol{u}_i = [a_x^i, a_y^i]^\top \in \mathbb{R}^2$ represent accelerations along the Cartesian axes, and the agent dynamics follow $\dot{\boldsymbol{x}}_i = [v_x^i, v_y^i, a_x^i, a_y^i]^\top$, with velocity and control constraints bounded in $[-10, 10]$ and $[-1, 1]$ respectively.

\textbf{Spread.} The underlying dynamics, state representation, control parametrization, and bounds are identical to those in the Target environment,i.e., agents evolve under double integrator dynamics with four-dimensional state vectors comprising positions and velocities, and two-dimensional control inputs that modulate accelerations; the bounded constraints on velocities and controls likewise remain $[-10, 10]$ and $[-1, 1]$, respectively.

\textbf{Target Bicycle} In the Target Bicycle environment, agents dynamics are modeled using a bicycle model where the state vector $\boldsymbol{x}_i = [p_x^i, p_y^i, \cos \theta^i, \sin \theta^i, v^i]^\top \in \mathbb{R}^5$ encapsulates the agent's position, heading orientation via its sine and cosine, and scalar speed; the control input $\boldsymbol{u}_i = [\delta^i, a^i]^\top$ includes the steering angle $\delta^i$ and linear acceleration $a^i$, with dynamics specified as $\dot{\boldsymbol{x}}_i = [v \cos \theta, v \sin \theta, -v \sin \theta \tan \delta, v \cos \theta \tan \delta, a]^\top$, where $v \in [-10, 10]$ and $\delta \in [-1.47, 1.47]$ are the admissible bounds on speed and steering angle.

Similar to the \textsc{METADRIVE} environments, we define CBFs based on distance/proximity as in \eqref{road_boundary}. In our formulation, we do not employ a separate heading CLF; instead, we regulate agent behavior using only the velocity CLF by aligning the desired velocity vector with the desired heading direction. Specifically, given a desired heading direction \( \theta_{\text{des}} \), we compute a target velocity vector as \( \boldsymbol{v}_{\text{des}} = v_{\text{mag}} [\cos(\theta_{\text{des}}), \sin(\theta_{\text{des}})]^\top \), where \( v_{\text{mag}} \) is a task-specific magnitude (e.g., maximum safe speed or adaptive goal-directed speed). This \( \boldsymbol{v}_{\text{des}} \) is then used in the velocity CLF described below to guide both the speed and heading behavior of the agent.

For the Target and Spread environments, where the agent's longitudinal and lateral velocities \( v_x^i, v_y^i \) are explicit components of the state, we define the Control Lyapunov Function (CLF) to track a desired velocity vector \( \boldsymbol{v}_{\text{des}} = [v_{x,\text{des}}, v_{y,\text{des}}]^\top \in \mathbb{R}^2 \) as:
\[
V(v_x^i, v_y^i) = \frac{1}{2} \left( (v_x^i - v_{x,\text{des}})^2 + (v_y^i - v_{y,\text{des}})^2 \right).
\]
Given the double integrator dynamics, where the control inputs \( a_x^i \) and \( a_y^i \) directly affect the velocity via \( \dot{v}_x^i = a_x^i \) and \( \dot{v}_y^i = a_y^i \), the time derivative of the CLF becomes:
\[
\dot{V} = (v_x^i - v_{x,\text{des}}) a_x^i + (v_y^i - v_{y,\text{des}}) a_y^i.
\]
To enforce convergence, we include the following CLF constraint in the QP:
\[
(v_x^i - v_{x,\text{des}}) a_x^i + (v_y^i - v_{y,\text{des}}) a_y^i \leq -\alpha \left( (v_x^i - v_{x,\text{des}})^2 + (v_y^i - v_{y,\text{des}})^2 \right),
\]
where \( \alpha > 0 \) is a tunable rate parameter.

In the Target Bicycle environment, each agent’s velocity is represented as a scalar speed \( v^i \in \mathbb{R} \), and the longitudinal acceleration \( a^i \in \mathbb{R} \) serves as the control input directly affecting the speed via \( \dot{v}^i = a^i \). To drive \( v^i \) to a desired speed \( v_{\text{des}} \in \mathbb{R} \), we define the CLF as:
\[
V(v^i) = \frac{1}{2}(v^i - v_{\text{des}})^2.
\]
The time derivative of this CLF, using \( \dot{v}^i = a^i \), is:
\[
\dot{V} = (v^i - v_{\text{des}}) a^i.
\]
To enforce convergence of the velocity to the target, the following CLF constraint is imposed in the QP:
\[
(v^i - v_{\text{des}}) a^i \leq -\alpha (v^i - v_{\text{des}})^2,
\]
where \( \alpha > 0 \) is a tunable parameter determining the exponential rate of convergence.

\textbf{Skills Description.} We define four interpretable low-level skills for each agent in the LiDAR suite environments, which are: i. speeding up, ii. slowing down, iii. turning left, and iv. turning right. 

\textit{Speeding Up.} This skill allows an agent to increase its translational velocity to navigate more efficiently through open space or move rapidly towards its assigned goal. Let the velocity at the start of the skill be \( \boldsymbol{v}_0 \in \mathbb{R}^2 \); the skill is considered complete once the velocity reaches \( \boldsymbol{v}_T = \boldsymbol{v}_0 + \Delta \boldsymbol{v} \), where \( \Delta \boldsymbol{v} \in \mathbb{R}^2 \), with \( \| \Delta \boldsymbol{v} \| > 0 \). The CLF constraint associated with velocity tracking is enforced by setting the desired velocity \( \boldsymbol{v}_{\text{des}} = \boldsymbol{v}_0 + \Delta \boldsymbol{v} \) in the QP-based low-level control policy. This skill facilitates faster progression in low-density scenarios while maintaining safety guarantees via CBFs.

\textit{Slowing Down.} This skill enables the agent to reduce its speed when approaching dynamic obstacles, other agents, or goal regions where precise navigation is required. Let the initial velocity be \( \boldsymbol{v}_0 \in \mathbb{R}^2 \), and define the skill termination condition as reaching \( \boldsymbol{v}_T = \boldsymbol{v}_0 - \Delta \boldsymbol{v} \), where \( \Delta \boldsymbol{v} \in \mathbb{R}^2 \) and \( \| \Delta \boldsymbol{v} \| > 0 \). To enforce this behavior, the velocity CLF is specified using \( \boldsymbol{v}_{\text{des}} = \boldsymbol{v}_0 - \Delta \boldsymbol{v} \). This skill is critical for smooth and safe deceleration, reducing the risk of collisions and improving maneuverability near goals.

\textit{Turning Left.} This skill enables the agent to redirect its heading counterclockwise to avoid obstacles or navigate toward an alternate subgoal. Let the initial heading be \( \theta_0 \), and the desired heading be \( \theta_{\text{des}} = \theta_0 + \Delta \theta \), where \( \Delta \theta > 0 \). Instead of using a heading CLF, we convert the desired heading into a velocity vector using a fixed target magnitude \( v_{\text{mag}} \), and set \( \boldsymbol{v}_{\text{des}} = v_{\text{mag}}[\cos(\theta_{\text{des}}), \sin(\theta_{\text{des}})]^\top \). This desired velocity is then tracked via the standard velocity CLF in the QP. The turning-left skill is useful for curvature adjustment in cluttered or multi-agent scenes.

\textit{Turning Right.} This skill enables clockwise heading correction using the same velocity CLF formulation. Given initial heading \( \theta_0 \) and desired adjustment \( \Delta \theta > 0 \), the target heading is \( \theta_{\text{des}} = \theta_0 - \Delta \theta \), which is converted to a velocity vector as \( \boldsymbol{v}_{\text{des}} = v_{\text{mag}}[\cos(\theta_{\text{des}}), \sin(\theta_{\text{des}})]^\top \). The agent’s translational motion is then regulated by the velocity CLF, enforcing convergence to the desired heading direction through velocity tracking. This skill is particularly effective in obstacle-rich or competitive zones where sharp directional changes are required.

\subsection{Additional Results}
\label{additional_results}
\begin{figure*}[h]
    \centering
    \includegraphics[width=0.65\textwidth]{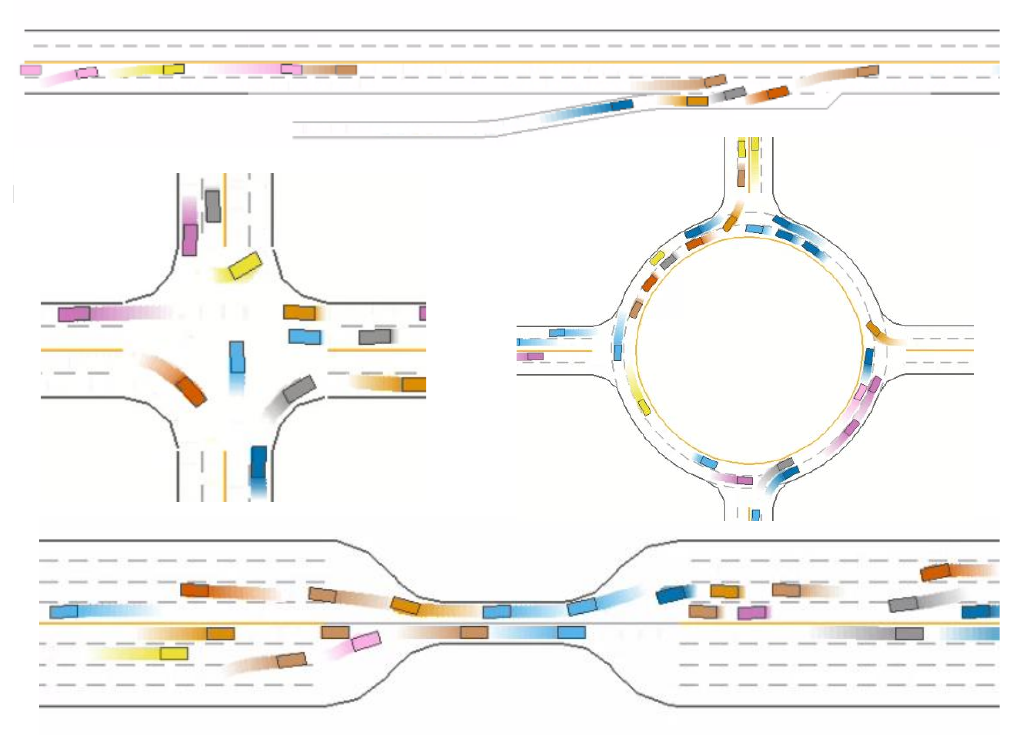} 
    \caption{Visualization of agents’ trajectories under the trained model: for each vehicle, its ten most recent positions are shown in a unique hue, with brightness fading from light (earlier points) to dark (later points) to indicate temporal progression.} 
    \label{fig:top_down_view}
\end{figure*}

\begin{figure*}[h]
    \centering
    \includegraphics[width=0.9\textwidth]{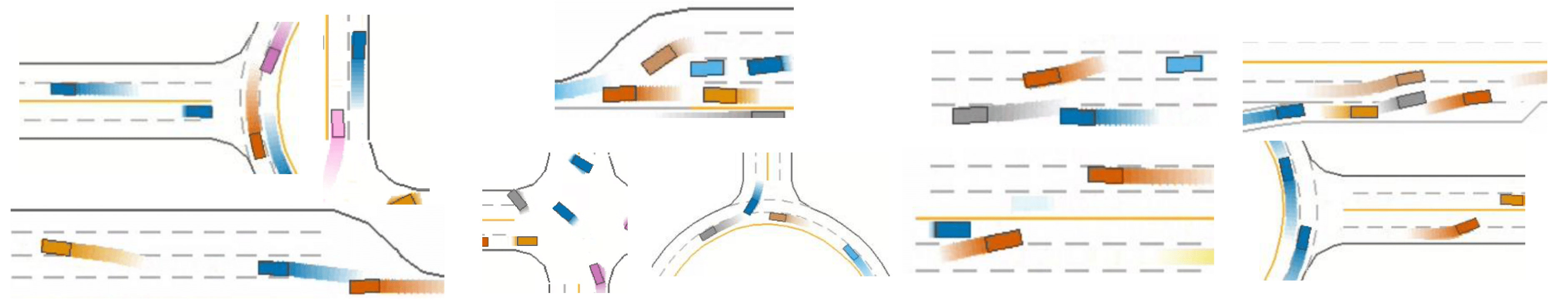} 
    \caption{Visualization of skills under the trained model. From left to right: (a) acceleration (speed-up), (b) yielding (slow-down), and (c) lane changing} 
    \label{fig:alltogether_skills}
\end{figure*}
Here, we present results to further demonstrate the merit of our proposed algorithm.  Specifically, we examine the trajectories in detail to analyze the distinct skills exhibited by the agents, offering insight into how our method sustains an extremely high success rate while also outperforming the baselines in terms of energy efficiency and episode duration. To this end, we employ a visualization scheme in which each agent’s ten most recent positions are depicted in a distinct color, with brightness gradually diminishing from light (earlier points) to dark (later points) to convey temporal progression. An example of this visualization is depicted in Figure \ref{fig:top_down_view}. By looking more closely, we can identify some of the skills that the agents exhibit during the episode. For instance, as agents approach their destinations (i.e., exit points of the environment), they execute speed up skill only when safety constraints are satisfied, thereby enhancing road throughput and reducing average travel time. Several such examples are shown in Figure \ref{fig:alltogether_skills}a. Yielding (i.e., controlled slow down/deceleration) is another common skill agents employ to maintain a safe distance from other vehicles. By anticipating the need to yield, agents avoid abrupt braking, which both enhances safety and reduces overall episode energy consumption. Several such examples are shown in Figure \ref{fig:alltogether_skills}b. Another skill exhibited by agents is lane changing: when executed safely at appropriate locations, it enhances road throughput and consequently reduces the average episode duration. An example of such lane changing is depicted in \ref{fig:alltogether_skills}c where the agent with brown color is changing its lane— creating a safe gap into which the grey agent then merges onto the main road.

\section{Conclusion}
In conclusion, we addressed the problem of learning safe policies in multi-agent, safety-critical autonomous systems with pointwise-in-time safety constraints. We proposed HMARL-CBF, a novel hierarchical MARL approach leveraging Control Barrier Functions (CBFs). Our method employs a skill-based hierarchy, learning a high-level policy for selecting skills and a low-level policy for safely executing them. We validated our approach in complex multi-agent environments, demonstrating its effectiveness through extensive empirical results. Future work can explore jointly learning both the set of skills and their corresponding policies, along with the cooperative behavior.

\bibliography{main}
\bibliographystyle{neurips_25}

\end{document}